\documentclass[accepted]{uai2022} 

\usepackage[american]{babel}

\usepackage{natbib} 
    \bibliographystyle{plainnat}
    
\usepackage{mathtools} 
\usepackage{booktabs} 
\usepackage{tikz} 

\usepackage[utf8]{inputenc} 
\usepackage[T1]{fontenc}    
\usepackage{xr-hyper}
\usepackage{hyperref}       
\usepackage{url}            
\usepackage{amsfonts}       
\usepackage{nicefrac}       
\usepackage{microtype}      
\hypersetup{
    colorlinks=true,
    allcolors=brown
}

\usepackage{algorithm}
\usepackage{algorithmic}
\usepackage{amsmath}
\usepackage{amsthm}
\usepackage{amssymb}
\usepackage{physics} 
\usepackage{numprint}
\usepackage{bbm}
\usepackage{dsfont}
\usepackage{float}
\usepackage{graphicx} 
\usepackage{subcaption}
\usepackage[font={small}]{caption}

\makeatletter
\newcommand*{\addFileDependency}[1]{
  \typeout{(#1)}
  \@addtofilelist{#1}
  \IfFileExists{#1}{}{\typeout{No file #1.}}
}
\makeatother

\newcommand*{\myexternaldocument}[1]{%
    \externaldocument{#1}%
    \addFileDependency{#1.tex}%
    \addFileDependency{#1.aux}%
}



\newcommand{\added}[1]{{\color{black}#1}}

\usepackage{amsmath,amsfonts,bm}









\def\eqref#1{equation~\ref{#1}}









\def\1{\bm{1}}










\DeclareMathAlphabet{\mathsfit}{\encodingdefault}{\sfdefault}{m}{sl}
\SetMathAlphabet{\mathsfit}{bold}{\encodingdefault}{\sfdefault}{bx}{n}











\newcommand{\E}{\mathbb{E}}

\newcommand{\R}{\mathbb{R}}





\newcommand{\ECE}{{\mathrm{ECE}}}
\newcommand{\LCE}{\mathrm{LCE}}
\newcommand{\MLCE}{\mathrm{MLCE}}
\newcommand{\method}{LoRe}

\newcommand{\hp}{{\hat{p}}}

\newtheorem{theorem}{Theorem}
\newtheorem{lemma}{Lemma}

\newcommand{\Xc}{{\mathcal{X}}}
\newcommand{\Yc}{{\mathcal{Y}}}

\newcommand{\Eb}{{\mathbb{E}}}

\myexternaldocument{luo_333-supp}
\title{Local Calibration: Metrics and Recalibration}
%
%
\author[1]{\href{mailto:<rsluo@stanford.edu>?Subject=Your UAI 2022 paper}{Rachel Luo}{}$^*$}
\author[2]{\href{mailto:<abhatnagar@salesforce.com>?Subject=Your UAI 2022 paper}{Aadyot Bhatnagar}{}$^*$}
\author[2]{Yu Bai}
\author[1]{Shengjia Zhao}
\author[2]{Huan Wang}
\author[2]{Caiming Xiong}
\author[1,2]{Silvio Savarese}
\author[1,2]{Stefano Ermon}
\author[1]{Edward Schmerling}
\author[1]{Marco Pavone}
\affil[1]{%
    Stanford University\\
    Stanford, California, USA
}
\affil[2]{%
    Salesforce AI Research\\
    Palo Alto, California, USA
}

\begin{document}
\maketitle

\begin{abstract}
Probabilistic classifiers output confidence scores along with their predictions, and these confidence scores should be calibrated, i.e., they should reflect the reliability of the prediction. Confidence scores that minimize standard metrics such as the expected calibration error (ECE) accurately measure the reliability \textit{on average} across the entire population. However, it is in general impossible to measure the reliability of an \textit{individual} prediction. In this work, we propose the local calibration error (LCE) to span the gap between average and individual reliability. For each individual prediction, the LCE measures the average reliability of a set of similar predictions, where similarity is quantified by a kernel function on a pretrained feature space and by a binning scheme over predicted model confidences. We show theoretically that the LCE can be estimated sample-efficiently from data, and empirically find that it reveals miscalibration modes that are more fine-grained than the ECE can detect. Our key result is a novel {\textbf{lo}cal \textbf{re}calibration} method \method{}, to improve confidence scores for individual predictions and decrease the LCE. Experimentally, we show that our recalibration method produces more accurate confidence scores, which improves downstream fairness and decision making on classification tasks with both image and tabular data. 
\end{abstract}

\def\thefootnote{*}\footnotetext{Equal contribution. Order decided by coin flip.}

\section{Introduction}
\label{sec:intro}

Uncertainty estimation is extremely important in high stakes decision-making tasks. For example, a patient wants to know the probability that a medical diagnosis is correct; an autonomous driving system wants to know the probability that a pedestrian is correctly identified. Uncertainty estimates are usually achieved by predicting a probability along with each classification. Ideally, we want to achieve individual calibration, i.e., we want to predict the probability that each sample is misclassified. 

However, each sample is observed only once for most datasets (e.g., image classification datasets do not contain identical images), making it impossible to estimate, or even define, the probability of incorrect classification for individual samples. 
Because of this, commonly used metrics such as the expected calibration error (ECE) measure the gap between a classifier's confidence and accuracy \textit{averaged} across the entire dataset. Consequently, ECE can be accurately estimated but does not measure the reliability of individual predictions. 

In this work, we propose the local calibration error (LCE), a calibration metric that spans the gap between fully global (e.g., ECE) and fully individual calibration. Motivated by the success of kernel-based locality in other fields such as fairness (where similar individuals should be treated similarly)~\citep{dwork2012fairness, pleiss17-fairness} and causal inference (where matching techniques are used to find similar neighboring samples)~\citep{stuart2010matching}, we approximate the probability of misclassification for an individual sample by computing the average classification error over similar samples, where similarity is measured by a kernel function in a pre-trained feature space and a binning scheme over predicted confidences. Intuitively, two samples are similar if they are close in a pretrained feature space and have similar predicted confidence scores.   
By choosing the bandwidth of the kernel function, we can trade off estimation accuracy and individuality: when the bandwidth is very large, 
we recover existing global calibration metrics; when the bandwidth is small, we approximate individual calibration. We choose an intermediate bandwidth, so our metric can be accurately estimated, and provides some measurement on the reliability of individual predictions. 

Theoretically, we show that the LCE can be estimated with polynomially many samples if the kernel function is bounded. Empirically, we also show that for intermediate values of the bandwidth, the LCE can be accurately estimated and reveals modes of miscalibration that global metrics (such as ECE) fail to uncover. 

In addition, we introduce a non-parametric, post-hoc \textbf{lo}calized \textbf{re}calibration method (\method), for lowering the LCE. 
Empirically, \method{} improves fairness by achieving low calibration error on all potentially sensitive subsets of the data, such as racial groups. Notably, it can do so without any prior knowledge of those groups, and is more effective than global methods at this task. 
In addition, our recalibration method improves decision making when there is a ``safe'' action that is selected whenever the predicted confidence is low. For example, an automated system which classifies tissue samples as cancerous should request a human expert opinion whenever it is unsure about a classification. In a simulation on an image classification dataset, we show that recalibrated prediction models more accurately choose whether to use the ``safe'' action, which improves the overall utility. 

In summary, the contributions of our paper are as follows. (1) We introduce a local calibration metric, the LCE, that is both easy to compute and can estimate the reliability of individual predictions. (2) We introduce a post-hoc localized recalibration method \method{}, that transforms a model's confidence predictions to improve the local calibration. (3) We empirically evaluate \method{} on several downstream tasks and observe that \method{} improves fairness and decision-making more than existing baselines.

\section{Background and Related Work}
\label{sec:background}

\subsection{Global Calibration Metrics}

Consider a classification task that maps from some input domain (e.g., images) $\Xc$ to a finite set of labels $\Yc = \lbrace 1, \cdots, m \rbrace$.
A classifier is a pair $(f, \hp)$ where $f: \Xc \to \Yc$ maps each input $x \in \Xc$ to a label $y \in \Yc$ and $\hp: \Xc \to [0, 1]$ maps each input $x$ to a confidence value $c$. Let $\Pr$ be a joint distribution on $\Xc \times \Yc$ (e.g., from which training or test data pairs $(x, y)$ are drawn). The classifier $(f, \hp)$ is perfectly calibrated~\citep{guo17-ts} with respect to $\Pr$ if for all $c \in [0, 1]$
\begin{equation}
    \Pr[f(X) = Y \mid \hp(X) = c] = c.
    \label{eq:perfect_calibration} 
\end{equation}

To numerically measure how well a classifier is calibrated, the most commonly used metric is the expected calibration error (ECE)~\citep{naeini2015obtaining, guo17-ts}, which measures the average absolute deviation from Eq.~\ref{eq:perfect_calibration} over the domain. 
In practice, given a finite dataset, the ECE is approximated by binning. The predicted confidences $\hp$ are partitioned into bins $B_1, \ldots, B_k$, and then a weighted average is taken of the absolute difference between the average confidence $\mathrm{conf}(B_i)$ and average accuracy $\mathrm{acc}(B_i)$ for each bin $B_i$: 
\begin{equation}
    \ECE(f, \hp) := \sum_{i = 1}^{k} \frac{\abs{B_i}}{N} \abs{\mathrm{conf}(B_i) - \mathrm{acc}(B_i)}.
    \label{eq:ece} 
\end{equation}

Similarly, the maximum calibration error (MCE)~\citep{naeini2015obtaining, guo17-ts} measures the average deviation from Eq.~\ref{eq:perfect_calibration} in the bin with the highest calibration error, and is defined as
\begin{equation}
    \mathrm{MCE}(f, \hp) := \max_{i} \abs{\mathrm{conf}(B_i) - \mathrm{acc}(B_i)}.
    \label{eq:mce} 
\end{equation}

\subsection{Existing Global Recalibration Methods}

Many existing methods apply a post-hoc adjustment that changes a model's confidence predictions to improve global calibration, including Platt scaling~\citep{platt99}, temperature scaling~\citep{guo17-ts}, isotonic regression~\citep{zadrozny02-isotonic}, and histogram binning~\citep{zadrozny01-hb}. These methods all learn a simple transformation from the original confidence predictions to new confidence predictions, and aim to decrease the expected calibration error (ECE). Platt scaling fits a logistic regression model; temperature scaling learns a single temperature parameter to rescale confidence scores for all samples simultaneously; isotonic regression learns a piece-wise constant monotonic function; histogram binning partitions confidence scores into bins $\lbrace [0, \epsilon), [\epsilon, 2\epsilon), \cdots, [1-\epsilon, 1] \rbrace$ and sorts each validation sample into a bin based on its confidence $\hp(x)$; it then resets the confidence level of all samples in the bin to match the classification accuracy of that bin. 

\subsection{Local Calibration}

Two notions of calibration that address some of the deficits of global calibration are class-wise calibration and group-wise calibration. Class-wise calibration groups samples by their true class label~\citep{kull19-dirichlet, Nixon2019MeasuringCI} and measures the average class ECE, while group-wise calibration uses pre-specified groupings (e.g., race or gender)~\citep{kleinberg16-groupcal, pleiss17-fairness} and measures the average group-wise ECE or maximum group-wise MCE. 

A few recalibration methods have been proposed for these notions of calibration as well. Dirichlet calibration~\citep{kull19-dirichlet} achieves calibration for groups defined by class labels, but does not generalize well to settings with many classes~\citep{zhao2021calibrating}. Multicalibration~\citep{hebert-johnson17-groupcal} achieves calibration for any group that can be represented by a polynomial sized circuit, but lacks a tractable algorithm. If the groups are known a priori, one can also apply global calibration methods within each group; however, this is impractical in many situations where the groups are not known for new examples at inference time. 

At an even more local level,~\cite{Zhao2020IndividualCW} looks at individual calibration in the regression setting and concludes that individual calibration is impossible to verify with a deterministic forecaster, and thus there is no general method to achieve individual calibration. 

\subsection{Kernel-Based Calibration Metrics}

\cite{kumar18a-mmce} introduces the maximum mean calibration error (MMCE), a kernel-based quantity that replaces the hard binning of the standard ECE estimator with a kernel similarity $k(\hp(x), \hp(x'))$ between the confidence of two examples. They further propose to optimize the MMCE directly in order to achieve better model calibration globally. ~\cite{widmann19-kce} extends their work and proposes the more general kernel calibration error. \added{\cite{zhang2020mixnmatch} and \cite{gupta2020calibration} also consider kernel-based calibration.} However, these methods only consider the similarity between model confidences $\hp(x), \hp(x')$, rather than the inputs $x, x'$ themselves.

\section{The Local Calibration Error}

Recall that commonly used metrics for calibration, such as the ECE or the MCE, are global in nature and thus only measure an \textit{aggregate} reliability over the entire dataset, making them insufficient for many applications. An ideal calibration metric would instead measure calibration at an individual level; however, doing so is impossible without making assumptions about the ground truth distribution~\citep{Zhao2020IndividualCW}. A localized calibration metric represents an adjustable balance between these two extremes. Ideally, such a metric should measure calibration at a local level (where the extent of the local neighborhood can be chosen by the user) and group similar data points together. 

In this section, we introduce the local calibration error (LCE), a kernel-based metric that allows us to measure the calibration locally around a prediction. Our metric leverages learned features to automatically group similar samples into a soft neighborhood, and allows the neighborhood size to be set with a hyperparameter $\gamma$.
We also consider only points with a similar model confidence as the prediction, so that similarity is defined in terms of distance both in the feature space and in model confidence.
Thus, the LCE effectively creates soft groupings that depend on the feature space; with a semantically meaningful feature space, these groupings correspond to useful subsets of the data. 
We then mention a few design choices and visualize LCE maps over a 2D feature space to show that we can use our metric to diagnose regions of local miscalibration. 

\subsection{Local Calibration Error Metric}

We propose a metric to measure calibration locally around a given prediction. The calibration of similar samples should be similar, so we use a kernel similarity function $k_{\gamma}: \Xc \times \Xc \to \mathbb{R}_+$, which provides similarity scores, to define soft local neighborhoods. $k_{\gamma}(x, x')$ has bandwidth $\gamma > 0$, which determines the extent of the local neighborhood --- as $\gamma$ increases, the neighborhood grows. Less similar (i.e., further away) samples $x'$ have less influence on the local calibration metric at $x$.
Also, as with the ECE and MCE (Eqs.~\ref{eq:ece} and \ref{eq:mce}), we use binning and consider only the points in the same confidence bin as $x$. Thus, the samples that influence the local calibration metric at $x$ are similar to $x$ in both features and model confidences. 

More formally, let $\phi: \Xc \to \mathbb{R}^d$ be a feature map that transforms an input to a feature vector, and let $k_{\gamma}$ be parameterized as  $k_{\gamma}(x, x') = g((\phi(x) - \phi(x')) / \gamma)$ for some Lipschitz function $g: \mathbb{R}^d \to \mathbb{R}_+$. Then given a data point $x \in \Xc$ and a classifier $(f, \hp)$, the {\em local calibration error} (LCE) of the model at $x$ is the expected difference between the model's confidence and accuracy on a randomly sampled data point $x' \sim \Pr$, weighted by the kernel similarity $k_{\gamma}(x, x')$. 

We say a probabilistic classifier $(\hat{p}, f)$ is \emph{perfectly locally calibrated} with respect to $k_\gamma$ if
\begin{align*}
    &\sup_{x\in {\rm supp}(\Pr)} \underbrace{\left(\E_{(x', y')\sim \Pr}\Big[ (\hat{p}(x') - \mathds{1}[f(x') = y']) \atop \cdot\ k_\gamma(x, x')~\Big\vert~\hat{p}(x') = \hat{p}(x)\Big] \right)}_{:= \mathrm{LCE}^\star_{\gamma}(x; f, \hat{p})} = 0.
\end{align*}
Similar to perfect calibration, perfect local calibration is achieved by the Bayes-optimal classifier. In general, perfect local calibration is a much stricter notion than perfect calibration due to localizing to each indiviudal data point $x$, and reduces to perfect calibration if $k_\gamma(x, x')\equiv 1$ is a trivial kernel. 

To define LCE on a finite dataset, we perform an additional binning on the confidence to deal with the conditioning.

Let $\mathcal{D} = ((x_1, y_1), \ldots, (x_N, y_N))$ be a dataset, and let $\beta(x) = \{i : \hp(x_i) \in B(\hp(x)) \}$ be the set of indices of the points in $\mathcal{D}$ occupying the same confidence bin as $x$. Then we can compute the LCE by
\begin{multline}
    \mathrm{LCE}_{\gamma}(x; f, \hp) = \\
    \abs{\frac{\sum_{i \in \beta(x)} (\hp(x_i) - \mathds{1}[f(x_i) = y_i]) k_\gamma(x, x_i) }{\sum_{i \in \beta(x)} k_\gamma(x, x_i)}}.
    \label{eq:lce}
\end{multline}
Note that the quantity $(\hp(x_i) - \mathds{1}[f(x_i) = y_i])$ is simply the difference between the confidence and the accuracy for sample $x_i$, and the denominator is a normalization term.

We then define the maximum local calibration error (MLCE) as 
\begin{equation}
    \MLCE_{\gamma}(f, \hp) := \max_{x} \mathrm{LCE}_{\gamma}(x; f, \hp).
    \label{eq:mlce}
\end{equation}

Intuitively, the LCE considers a neighborhood about a sample $x$ (as defined by the kernel $k_\gamma$ and the confidence bin $B$), and computes the kernel-weighted average of the difference between the confidence and accuracy for each sample in that neighborhood. 
Note that by changing the bandwidth $\gamma$, we can interpolate the LCE between an individualized calibration metric (as $\gamma \to 0$) and a global one (as $\gamma \to \infty$). Lemma~\ref{lemma:global} makes this more concrete under the assumption that $\lim_{\gamma \to \infty} k_{\gamma}(x, x') = 1$ (proof in Appendix \ref{appendix:lemma}). For example, the Laplacian and Gaussian kernels satisfy this condition.

\begin{lemma} \label{lemma:global}
As $\gamma \to \infty$, the MLCE converges to the MCE.
\end{lemma}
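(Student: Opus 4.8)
The plan is to compute the pointwise limit of $\mathrm{LCE}_\gamma(x;f,\hp)$ for a fixed query point $x$ and then pass the limit through the outer maximum. First I would fix $x$, let $B_j = B(\hp(x))$ be its confidence bin, and note that $\beta(x) = \{i : \hp(x_i)\in B_j\}$ indexes exactly the dataset points sharing that bin. Writing $w_i := \hp(x_i) - \mathds{1}[f(x_i)=y_i]$ for the per-sample calibration gap, Eq.~\ref{eq:lce} expresses $\mathrm{LCE}_\gamma(x;f,\hp)$ as the absolute value of the kernel-weighted average $\big(\sum_{i\in\beta(x)} w_i\, k_\gamma(x,x_i)\big)\big/\big(\sum_{i\in\beta(x)} k_\gamma(x,x_i)\big)$.

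Next I would invoke the hypothesis $\lim_{\gamma\to\infty} k_\gamma(x,x_i) = 1$. Because $\beta(x)$ is finite and nonempty --- for a dataset query point $x=x_n$ its own index lies in $\beta(x)$ --- both numerator and denominator are finite sums of kernel values and converge as $\gamma\to\infty$: the numerator to $\sum_{i\in\beta(x)} w_i$ and the denominator to $N_j := \abs{\beta(x)} \ge 1 > 0$. Since the limiting denominator is nonzero, the quotient converges to the ratio of the limits, giving
\begin{equation*}
\lim_{\gamma\to\infty}\mathrm{LCE}_\gamma(x;f,\hp) = \abs{\frac{1}{N_j}\sum_{i\in\beta(x)} w_i} = \abs{\mathrm{conf}(B_j) - \mathrm{acc}(B_j)},
\end{equation*}
because the uniform average of the $w_i$ over bin $B_j$ is exactly $\mathrm{conf}(B_j)-\mathrm{acc}(B_j)$. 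The crucial observation is that this limit depends on $x$ only through its bin $B_j$, collapsing the localized quantity onto the global bin-level calibration gap.

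Finally I would take the maximum over $x$. Reading $\MLCE_\gamma(f,\hp) = \max_x \mathrm{LCE}_\gamma(x;f,\hp)$ as a maximum over the finitely many dataset points (equivalently, the finitely many nonempty bins), I can interchange the limit and the maximum via the elementary bound $\abs{\max_n a_n - \max_n b_n} \le \max_n \abs{a_n - b_n}$, which forces $\max_n a_n(\gamma) \to \max_n L_n$ whenever $a_n(\gamma)\to L_n$ for each of the finitely many $n$. As $x$ sweeps the dataset, $B_j$ sweeps every nonempty bin, so $\lim_{\gamma\to\infty}\MLCE_\gamma(f,\hp) = \max_j \abs{\mathrm{conf}(B_j) - \mathrm{acc}(B_j)} = \mathrm{MCE}(f,\hp)$, which is the claim.

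The step I expect to be the main obstacle is justifying this interchange of limit and maximum. It is immediate for the finite-dataset maximum, but if one instead reads $\max_x$ as a supremum over a continuous, unbounded domain, then $k_\gamma \to 1$ is only pointwise: a query point $x$ whose feature $\phi(x)$ grows with $\gamma$ can keep the within-bin kernel weights skewed, so $\sup_x \mathrm{LCE}_\gamma(x)$ need not converge to the uniform-average value. Handling that case cleanly would additionally require $k_\gamma(x,x')\to 1$ uniformly in $x$, which holds for the Gaussian and Laplacian kernels on a bounded feature space. I would therefore prove the statement for the finite-dataset maximum and remark on the uniform-convergence condition that covers the continuous reading.
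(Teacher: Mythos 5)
Your proof is correct and follows essentially the same route as the paper's: since $k_\gamma \to 1$ pointwise, each $\mathrm{LCE}_\gamma(x;f,\hp)$ collapses to the uniform calibration gap $\abs{\mathrm{conf}(B_j)-\mathrm{acc}(B_j)}$ of $x$'s bin, and the maximum over $x$ becomes the maximum over bins, i.e., the MCE. The only difference is that you explicitly justify the limit/max interchange (via finiteness of the dataset) and flag the continuous-domain caveat, both of which the paper's one-line computation leaves implicit.
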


Theorem~\ref{theorem:slce_informal} shows that under certain regularity conditions, the finite-sample estimator $\mathrm{LCE}(x)$ converges uniformly and sample-efficiently to its true expected value $\mathrm{LCE}^{*}(x)$:

\begin{theorem}
\label{theorem:slce_informal}
(Informal) Let $\alpha \le \inf_{x\in\Xc} \Eb\left[k_\gamma(X, x) \mathds{1}[\hp(X) \in B(\hp(x))] \right]$ be a lower bound on the expectation of the kernel, and $d$ be the dimension of the kernel's feature space. If the sample size is at least $\widetilde{O}(d / \alpha^4\epsilon^2)$ where $\epsilon>0$ is a target accuracy level, then with probability at least $1-\delta$ we have
\begin{align*}
    \sup_{x \in \mathcal{X}} \abs{\mathrm{LCE}_\gamma(x; f, \hp) - \mathrm{LCE}^{*}_\gamma(x; f, \hp) } \le \epsilon.
\end{align*}
Here, $\widetilde{O}$ hides log factors of the form $\log(1/\alpha\gamma\delta\epsilon)$. In practice, $\alpha$ depends inversely on $\gamma$.
\end{theorem}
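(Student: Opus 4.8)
My plan is to treat the estimator $\mathrm{LCE}_\gamma(x;f,\hp)$ as the absolute value of a ratio $\widehat S(x)/\widehat Z(x)$ and to control numerator and denominator separately, uniformly over $x$. Writing
\[
\widehat Z(x) = \tfrac1N\textstyle\sum_{i=1}^N k_\gamma(x,x_i)\,\mathds{1}[\hp(x_i)\in B(\hp(x))], \quad
\widehat S(x) = \tfrac1N\textstyle\sum_{i=1}^N (\hp(x_i)-\mathds{1}[f(x_i)=y_i])\,k_\gamma(x,x_i)\,\mathds{1}[\hp(x_i)\in B(\hp(x))],
\]
with population counterparts $Z^\star(x)=\E[k_\gamma(X,x)\mathds{1}[\hp(X)\in B(\hp(x))]]$ and $S^\star(x)=\E[(\hp(X)-\mathds{1}[f(X)=Y])k_\gamma(X,x)\mathds{1}[\hp(X)\in B(\hp(x))]]$ (so that $\mathrm{LCE}^{*}_\gamma(x)=|S^\star(x)/Z^\star(x)|$ is the population ratio formed with the \emph{same} binning), the target $\sup_x\bigl||\widehat S/\widehat Z|-|S^\star/Z^\star|\bigr|$ is at most $\sup_x|\widehat S/\widehat Z-S^\star/Z^\star|$ by the reverse triangle inequality.

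For the ratio I would cross-multiply and bound each factor crudely. Using $|\widehat S/\widehat Z - S^\star/Z^\star| = |\widehat S Z^\star - S^\star\widehat Z|/(|\widehat Z|\,|Z^\star|)$ together with $\widehat S Z^\star - S^\star\widehat Z = (\widehat S-S^\star)Z^\star - S^\star(\widehat Z-Z^\star)$ and the kernel bound $0\le g\le G$ (so $|S^\star|,Z^\star\le G$) gives
\[
\Bigl|\frac{\widehat S}{\widehat Z}-\frac{S^\star}{Z^\star}\Bigr| \le \frac{G\bigl(|\widehat S-S^\star|+|\widehat Z-Z^\star|\bigr)}{|\widehat Z|\,|Z^\star|}.
\]
By hypothesis $Z^\star(x)\ge\alpha$ for every $x$, and once I establish $\sup_x|\widehat Z(x)-Z^\star(x)|\le\alpha/2$ I obtain $|\widehat Z(x)|\ge\alpha/2$, so the denominator above is at least $\alpha^2/2$. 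This $\alpha^{-2}$ is exactly the source of the $\alpha^{-4}$ in the final sample complexity: to push the right-hand side below $\epsilon$ I will need the uniform deviations $\sup_x|\widehat S-S^\star|$ and $\sup_x|\widehat Z-Z^\star|$ to be of order $\alpha^2\epsilon$.

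The crux is the \emph{uniform} concentration of $\widehat Z$ and $\widehat S$ around their means; for a single fixed $x$ each is an average of $[-G,G]$-bounded i.i.d.\ terms and Hoeffding suffices, so the real work is converting pointwise control into a supremum. Here the Lipschitz assumption on $g$ is essential: since $|k_\gamma(x,x_i)-k_\gamma(x',x_i)|\le L\|\phi(x)-\phi(x')\|/\gamma$, both $\widehat Z$ and $\widehat S$ are $(L/\gamma)$-Lipschitz in the feature $\phi(x)$ on each fixed confidence bin. I would therefore take an $\eta$-net of $\phi(\Xc)\subseteq\R^d$, of cardinality $(CR/\eta)^d$ for $R$ the diameter, apply Hoeffding plus a union bound over the net, and transfer the bound to arbitrary $x$ at the cost of an additive $O(L\eta/\gamma)$. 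The dimension $d$ enters precisely through $\log$ of the net size, producing the $\sqrt{d/N}$ rate; choosing $\eta$ polynomially small in $\gamma$ and $1/N$ makes the transfer error negligible and contributes only the advertised $\log(1/\alpha\gamma\delta\epsilon)$ factors. A convenient simplification is that the confidence binning discretizes the conditioning: $\mathds{1}[\hp(x_i)\in B(\hp(x))]$ depends on $x$ only through which of the finitely many bins contains $\hp(x)$, so the bin indicator never varies continuously with $x$ and contributes merely a union-bound factor equal to the number of bins rather than an extra continuum to cover.

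Combining the pieces, a union bound over net and bins yields, with probability $1-\delta$, that $\sup_x|\widehat Z-Z^\star|$ and $\sup_x|\widehat S-S^\star|$ are both of order $G\sqrt{d\log(\cdots)/N}+O(L\eta/\gamma)$; setting this $\le\alpha/2$ validates the denominator lower bound, and setting it $\lesssim\alpha^2\epsilon/G$ drives the ratio perturbation below $\epsilon$, which rearranges to $N\ge\widetilde O(d/\alpha^4\epsilon^2)$. I expect the main obstacle to be exactly this uniformity over the uncountable index set $\Xc$: the pointwise estimate is elementary, but obtaining a supremum without the union bound blowing up requires the Lipschitz-in-features structure of the kernel to hold the covering number at $(R/\eta)^d$ and requires the binning to keep the conditioning finite. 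The only mild technical care needed is that $\phi(\Xc)$ be bounded (or effectively bounded under $\Pr$) so the net is finite; the remainder is bookkeeping of the $G$, $L$, and $\log$ constants.
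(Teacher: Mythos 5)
Your proposal is correct and follows essentially the same route as the paper's proof: Hoeffding plus a union bound over an $\eta$-net of $\phi(\Xc)$, Lipschitz transfer of both numerator and denominator estimates off the net, and the lower bound $\alpha$ on the denominator producing the $\alpha^{-4}$ rate. The only cosmetic difference is that you handle the confidence-bin conditioning by a separate union bound over the finitely many bins, whereas the paper folds the same requirement into its ``binning-aware'' covering assumption (each $x$ must have a net point that is both feature-close and in the same bin) --- the two devices are interchangeable here.
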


To summarize, the MLCE measures a worst-case individual calibration error as $\gamma \to 0$ (i.e., the effective neighborhood is very small) and converges to the global MCE metric as $\gamma \to \infty$ (i.e., the effective neighborhood is very large). In practice, one must pick intermediate values of $\gamma$ to balance a more local notion of calibration error with the sample efficiency of its estimation. 
A more formal statement and full proof of Theorem \ref{theorem:slce_informal} can be found in Appendix~\ref{appendix:lce}.

\subsection{Choice of Kernel and Feature Map} 
\label{section:choice-kernel}

In this work, we compute the LCE using 15 equal-width bins and use the Laplacian kernel
\begin{align*}
    k_{\gamma}(x, x') = \exp(-\frac{\norm{\phi(x) - \phi(x')}_1}{d \gamma}).
\end{align*}

Because distances in a high-dimensional input space (e.g., image data) may not be meaningful on their own, we evaluate the kernel on a feature representation of $x$ rather than on $x$ itself. 
Features learned from neural networks have proven useful for a wide range of tasks, and they have been shown to capture useful semantic features of their inputs \citep{huh16makes, chen20-simple, li20-sentence}. The kernel similarity term $k_{\gamma}(x, x')$ in the LCE thus leverages learned features to {\em automatically} capture rich subgroups of the data. For image data, we chose an Inception-v3 model as our feature map, since Inception features are widely accepted as useful and representative in many areas 
(e.g., for generative models~\citep{Salimans2016ImprovedTF}), \added{though other neural features can also be used (Appendix \ref{appendix:more_results})}. For tabular data, we used the final hidden layer of the neural network trained for classification.

\begin{figure} 
    \centering
    \includegraphics[width=0.8\linewidth,trim={0cm 0cm 0cm 0cm}]{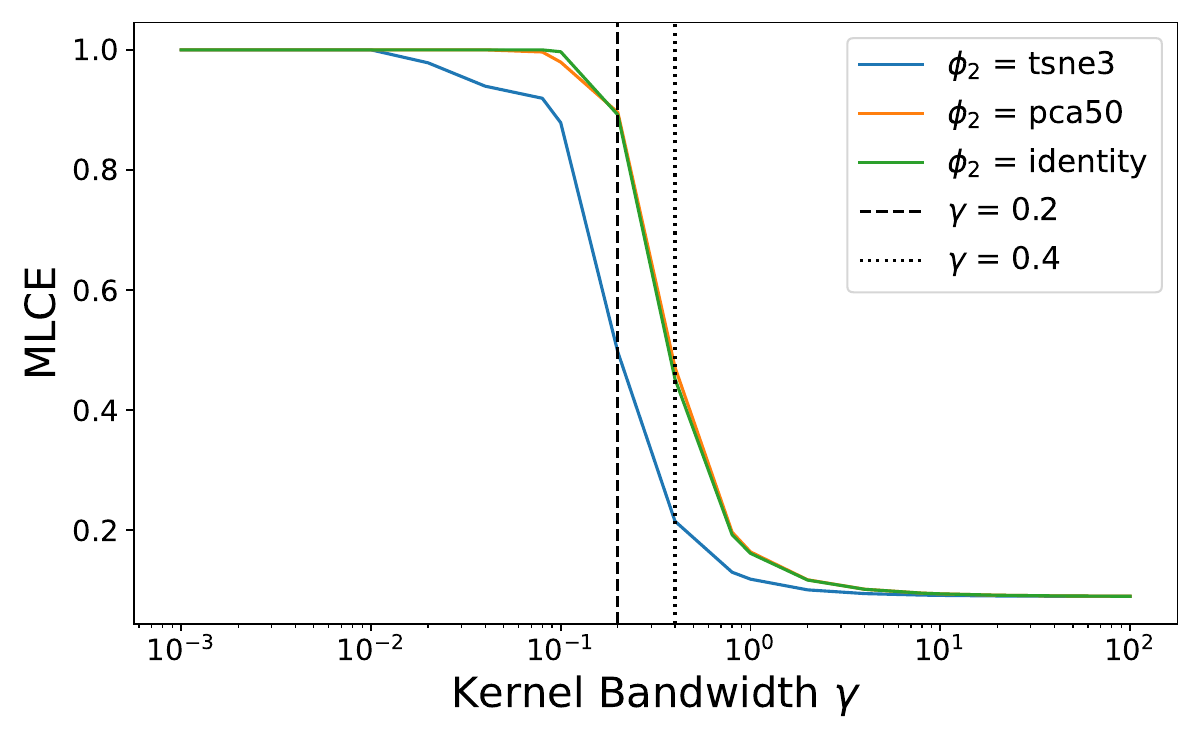}
    \caption{MLCE of a Resnet-50 classifier on the ImageNet test split, as a function of the kernel bandwidth $\gamma$. We use a Laplacian kernel with feature map $\phi_2 \circ \phi_1$, where $\phi_1: \mathcal{X} \to \mathbb{R}^{2048}$ is the Inception-v3 model's hidden layer. Blue: $\phi_2: \mathbb{R}^{2048} \to \mathbb{R}^3$ is t-SNE; orange: $\phi_2: \mathbb{R}^{2048} \to \mathbb{R}^{50}$ is PCA; green: $\phi_2(z) = z$ is the identity.}
    \label{fig:mlce_limits}
\end{figure}

In general, we also use t-SNE or PCA to reduce the dimension of the feature space. For example, the 2048-D Inception-v3 embedding is still very high-dimensional. \added{We report results using t-SNE to reduce the dimension to 2 or 3, as well as PCA to reduce the dimension to reduce the dimension to 50 for image data and 20 for tabular data.}
Thus the overall representation function is $\phi(x) = \phi_2(\phi_1(x))$, where $\phi_1$ maps from the inputs to the neural features, and $\phi_2$ reduces the feature space dimension. 

Figure~\ref{fig:mlce_limits} plots the MLCE as a function of the kernel bandwidth for an ImageNet classification task. Note that when $\gamma$ is small, the MLCE is 1 (a worst-case individual calibration error), and when $\gamma$ is large, the MLCE approaches the global MCE. 
To obtain a single summary statistic describing the local calibration error, we can view this plot and pick a value of $\gamma$ between the limiting behaviors. We find that $\gamma = 0.2$ \added{and $\gamma = 0.4$} are good intermediate points for the 3-D t-SNE \added{and 50-D PCA} features, respectively (Figure~\ref{fig:mlce_limits}).

\subsection{Local Calibration Error Visualizations}
\begin{figure*}[h]
    \centering
    \includegraphics[width=0.87\textwidth, trim={0cm 0cm 0cm 0cm}]{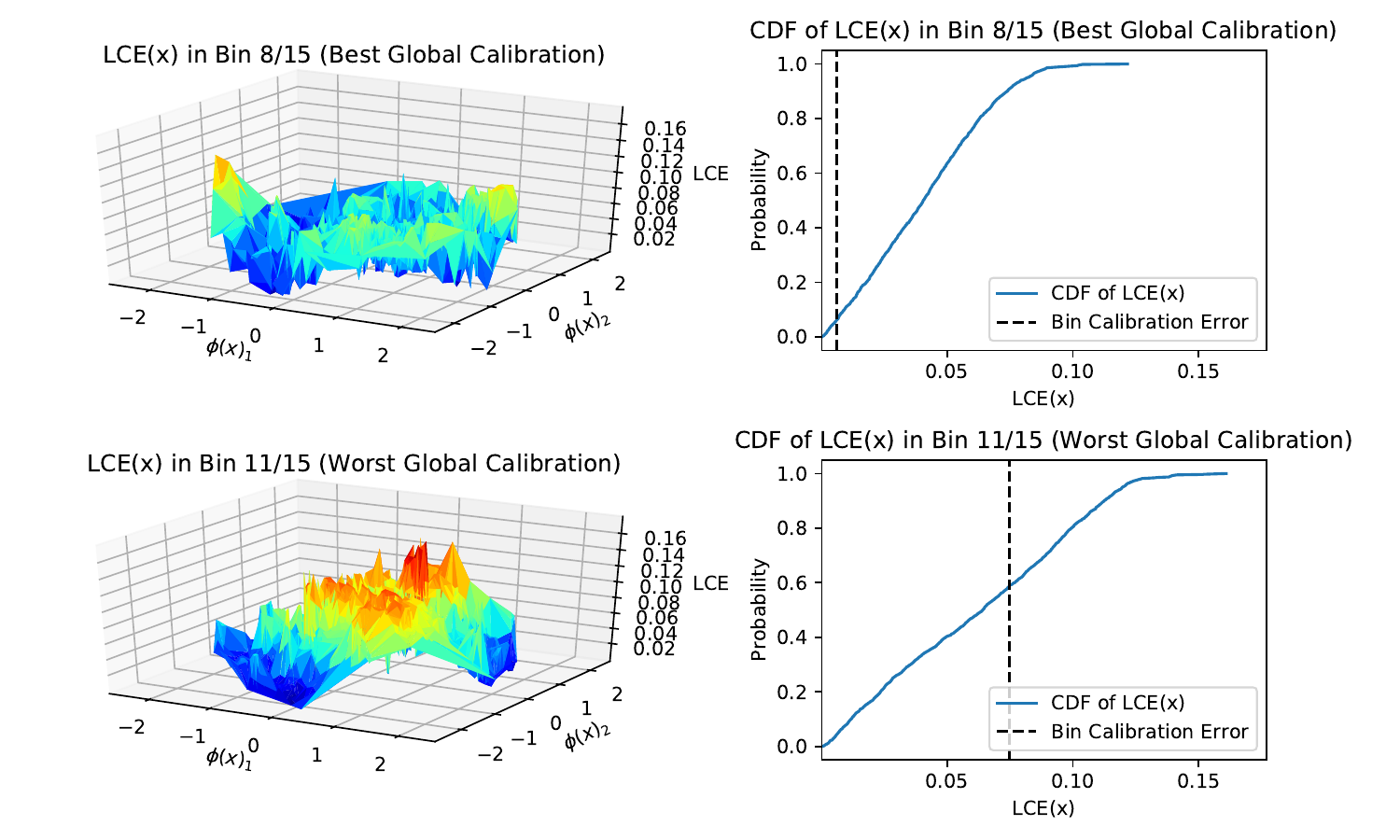}
    \caption{We visualize $\LCE_{0.2}(x; f, \hp)$ for a ResNet-50 classifier $(f, \hp)$ pre-trained on ImageNet, for every image $x$ in the ImageNet validation set. We focus on the bins with the best and worst global calibration errors.}
    \label{fig:landscape}
\end{figure*}

To provide more intuition for the LCE, we will now visualize some examples of the LCE metric over a 2-D feature embedding.  
We consider a ResNet-50 model pre-trained on ImageNet as our classifier $(f, \hp)$, and pre-trained Inception-v3 features as a feature map $\phi_1: \mathcal{X} \to \mathbb{R}^{2048}$.  $\phi_2: \mathbb{R}^{2048} \to \mathbb{R}^{2}$ then reduces the 2048-D feature vectors with t-SNE to two dimensions for ease of visualization in the LCE landscapes, so our overall representation function is $\phi = \phi_2(\phi_1(x))$. Figure \ref{fig:landscape} visualizes the landscape of $\LCE_{0.2}(x; f, \hp)$ as a function of $\phi(x)$ for the entire ImageNet validation set, as well as the marginal CDF of $\LCE_{0.2}(x, f, \hp)$. We show these visualizations for the two confidence bins with the best and worst {\em global} calibration.

In the bin with the best global calibration, Figure \ref{fig:landscape} (top) shows that the landscape of the LCE is highly non-uniform, and the CDF of the LCE lies almost entirely to the right of the bin's average calibration error. Numerically, the bin's average calibration error is $0.0061$, while its average LCE is $0.0411$. 
This implies that the regions where the model is underconfident and overconfident are spatially clustered within the bin. Because global calibration metrics solely consider the average accuracy and average confidence within a bin, confidence predictions that are too high and too low are averaged out to obtain a low overall error value; they fail to capture this {\em localized} miscalibration. 

In the bin with the worst global calibration, Figure \ref{fig:landscape} (bottom) clearly shows that the LCE still has high variance, even though the average calibration error of the bin ($0.0746$) is much closer to its average LCE ($0.0629$). The landscape also shows a clear cluster with higher LCE, indicating that the miscalibrated samples are similar in the feature space.

\section{LCE Recalibration}

\begin{figure*}[t]
    \centering
    \includegraphics[width=0.87\textwidth, trim={0cm 0cm 0cm 0cm}]{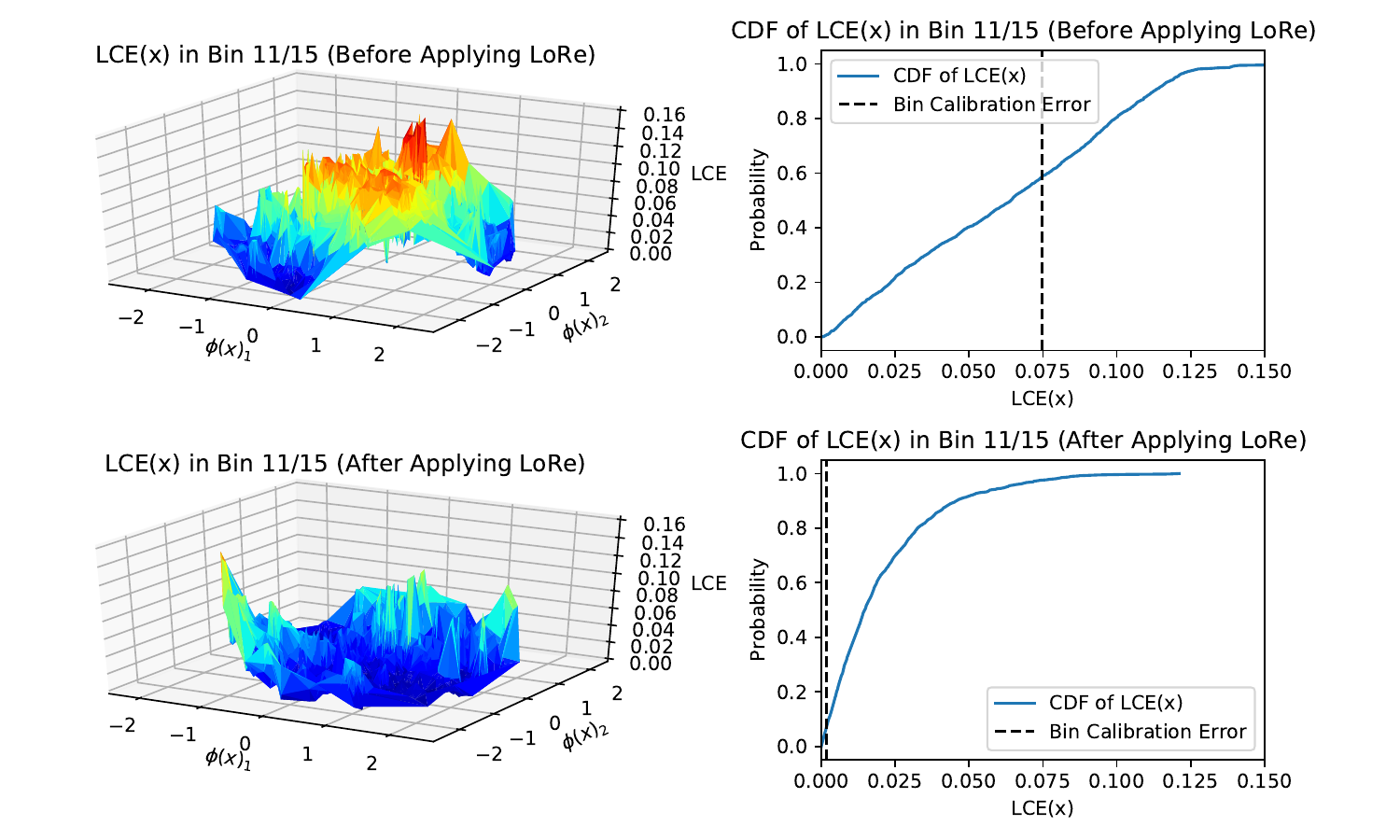}
    \caption{We visualize $\LCE_{0.2}(x; f, \hp)$ for a ResNet-50 classifier $(f, \hp)$ pre-trained on ImageNet, for every image $x$ in the ImageNet validation set for bin 11 (the bin with the worst global calibration before applying \method{}). \method{} makes the landscape flatter and lower.}
    \label{fig:landscape_bin11}
\end{figure*}

In this section, we introduce \textbf{lo}cal \textbf{re}calibration (\method), a non-parametric recalibration method that adjusts a model's output confidences to achieve better local calibration. Our method improves the LCE more than existing recalibration methods, and using our method improves performance on both downstream fairness tasks and downstream decision-making tasks. Specifically, we can leverage the kernel similarity to achieve strong calibration for all sensitive subgroups of a population, without knowing those groups a priori. As long as the feature space is semantically meaningful, \method{} provides utility for downstream tasks without needing subgroup labels for the samples.
If the subgroups {\em are} known, one can recover standard group-wise recalibration methods (and metrics) by using the improper kernel $k(x, x') = \mathds{1}[x, x' \text{ in same group}]$.

The idea behind our method is simple: we can compute the kernel-weighted accuracy for each point $x$ of all the points that are in the same confidence bin as $x$, and then reset the confidence of $x$ to this kernel-weighted accuracy value. Note that using the kernel function to compute this value is intuitively like taking a weighted average of the accuracy of the points in the local neighborhood of $x$. Thus, \method{} can be considered a local analogue to histogram binning. 

More formally, given a trained classifier $(f, \hp)$, a recalibration dataset $\mathcal{D} = ((x_1, y_1), \ldots, (x_N, y_N))$, and a fixed point $x \in \Xc$, let $\beta(x) = \{i : \hp(x_i) \in B(\hp(x)) \}$ be the set of indices of the points in $\mathcal{D}$ occupying the same confidence bin as $x$. Then, we compute the recalibrated confidence as
\begin{equation}
    \hp'(x) = \frac{\sum_{i \in \beta(x)} k_{\gamma}(x, x_i) \mathds{1}[f(x_i) = y_i]}{\sum_{i \in \beta(x)} k_{\gamma}(x, x_i)}.
    \label{eq:lore}
\end{equation}

Equation~\ref{eq:lore} represents the kernel-weighted average accuracy of all points in the same confidence bin as $x$. 
In the limit as the kernel bandwidth $\gamma \to \infty$, $\hp'(x) \to \sum_{i \in \beta(x)} \mathds{1}[f(x_i) = y_i] / \abs{\beta(x)}$ recovers histogram binning. As $\gamma \to 0$, $\hp'(x) \to \mathds{1}[f(x_{i^*}) = y_{i^*}]$, where $i^* = \arg \min_{i \in \beta(x)} k_{\gamma}(x, x_i)$, thus recovering a nearest-neighbor method. For intermediate $\gamma$, 
our method interpolates between the two extremes. 
Throughout this work, we used $\gamma = 0.2$ for \method{} with tSNE and $\gamma = 0.4$ for \method{} with PCA throughout this work, since these represent intermediate points between the limiting behaviors of the LCE (e.g., see Fig.~\ref{fig:mlce_limits}).

In Figure~\ref{fig:landscape_bin11}, we visualize the landscape of $\LCE_{0.2}(x; f, \hp)$ for the ImageNet validation set both before and after applying \method{}, for bin 11 (the bin with the worst global calibration before applying \method{}). Note that the LCE landscape before applying \method{} has an area that is raised relative to the rest of the landscape, indicating systematic miscalibration. However, after applying \method{}, the landscape is both flatter and lower, indicating improved global and local calibration, and indeed the CDF plot shows a sharper rise than before. Thus, \method{} works as desired.  

\section{Experiments}

\label{sec:experiments}

\begin{figure*}[t]
    \fontsize{8.5}{10}\selectfont
    \centering
    \begin{tabular}{l|c|c|c|c}
        \toprule
        Recalibration method & Setting 1 & Setting 2 & Setting 3 & Setting 4 \\
        \midrule
        No recalibration
        & 0.588 ± 0.107 
        & 0.407 ± 0.087 
        & 0.446 ± 0.083
        & 0.480 ± 0.122
        \\

        Temperature scaling
        & 0.521 ± 0.092 
        & 0.532 ± 0.089  
        & 0.441 ± 0.079 
        & 0.403 ± 0.108
        \\

        Histogram binning
        & 0.515 ± 0.081 
        & 0.218 ± 0.056 
        & 0.268 ± 0.067
        & 0.368 ± 0.108
        \\

        Isotonic regression
        & 0.596 ± 0.063 
        & 0.615 ± 0.100 
        & 0.716 ± 0.082 
        & 0.425 ± 0.047
        \\

        MMCE optimization
        & 0.526 ± 0.172 
        & 0.429 ± 0.079 
        & 0.475 ± 0.079 
        & 0.411 ± 0.088
        \\

        \midrule
        \added{Group temp.\ scaling}
        & \added{0.423 ± 0.066}
        & \added{0.673 ± 0.075}
        & \added{0.329 ± 0.108}
        & 0.411 ± 0.110
        \\
        \added{Group hist.\ binning}
        & \added{0.542 ± 0.083}
        & \added{0.260 ± 0.053}
        & \added{0.352 ± 0.068}
        & 0.414 ± 0.090
        \\

        \midrule
        \method{} (tSNE) (ours)
        & \textbf{0.351 ± 0.084}
        & \textbf{0.165 ± 0.055}
        & 0.235 ± 0.063
        & \textbf{0.215 ± 0.037}
        \\

        \added{\method{} (PCA) (ours)}
        & \added{0.392 ± 0.071}
        & \added{0.167 ± 0.013}
        & \added{\textbf{0.154 ± 0.082}}
        & 0.300 ± 0.065
        \\
        \bottomrule
    \end{tabular}
    \captionof{table}{
    Performance on downstream fairness, as measured by maximum group-wise MCE (lower is better).
    Experimental settings as described in Section \ref{sec:fairness}. Mean and standard deviations are computed over 60 random seeds for settings 1 and 4, and 20 for settings 2 and 3. Best results are \textbf{bold}.
    } 
    \label{tab:fairness}
\end{figure*}

In this section, we show empirically that \method{} substantially improves LCE values, and that these lower LCE values lead to better performance on downstream fairness and decision-making tasks. In particular, we evaluate the local calibration through the MLCE, because we are interested in understanding a model's {\em worst-case} local miscalibration. On each task, we compare the performance of \method{} to no recalibration (\added{`Original'}), temperature scaling (\added{`TS'}) \citep{guo17-ts}, histogram binning (\added{`HB'}) \citep{zadrozny01-hb}, isotonic regression (\added{`IR'}) \citep{zadrozny02-isotonic}, and direct MMCE optimization (\added{`MMCE'}) \citep{kumar18a-mmce}, all strong {\em global} recalibration methods. 

We first run extensive experiments on four datasets to demonstrate that \method{} outperforms all baselines and achieves the lowest MLCE over a wide range of $\gamma$ values. We then evaluate the performance of our method on a fairness task, where it is important that a model is well-calibrated for all sensitive subgroups of a given population, and we demonstrate that it achieves the lowest group-wise MCE. Notably, we find that the MLCE is well-correlated with the group-wise MCE across all experimental settings, and thus achieving low MLCE is a good indicator that a model has good group-wise calibration. Finally, we compare our method against the baselines on a cost-sensitive decision-making task, where there is a low cost for a prediction of ``unsure'' but a high cost for an incorrect prediction, and show that our method achieves the lowest cost.

\begin{figure}
    \centering
    \includegraphics[width=0.8\linewidth,trim={0cm 0cm 0cm 0cm}]{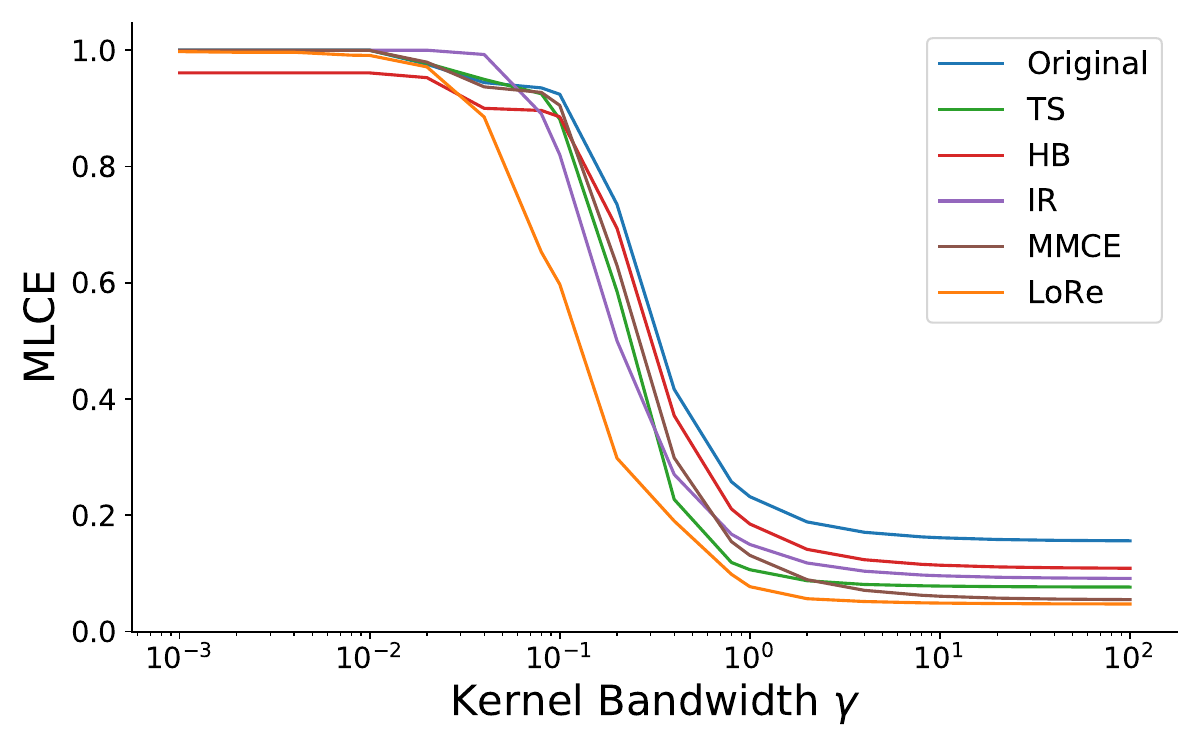}
    \caption{MLCE vs.\ kernel bandwidth $\gamma$ for ImageNet. \method{} (with t-SNE and $\gamma = 0.2$) achieves the lowest MLCE for a wide range of $\gamma$. This suggests that \method{} leads to lower LCE values across the whole dataset.}
    \label{fig:recalibration_mlce}
\end{figure}

\subsection{Datasets}
\label{sec:datasets}

\textbf{ImageNet dataset \citep{deng2009imagenet}:} A large-scale dataset of natural scene images with 1000 classes; over 1.3 million images total. The training/validation/test split is 1.3mil\,/\,25,000\,/\,25,000.

\textbf{UCI Communities and Crime dataset \citep{dua-uci}:} This tabular dataset contains attributes about American neighborhoods (e.g., race, age, employment, housing, etc.). The task is to predict the neighborhood's violent crime rate. The training/validation/test split is 1494\,/\,500\,/\,500. We randomize this split over multiple trials.

\textbf{CelebA dataset \citep{liu2015faceattributes}:} A large-scale dataset of face images with 40 attribute annotations (e.g., glasses, hair color, etc.); 202,599 images total. The training/validation/test split is 162,770\,/\,19,867\,/\,19,962. 

\textbf{COMPAS Criminal Recidivism dataset \citep{compas2016}:} This tabular dataset reports American individuals' demographic information and criminal history. The task is to predict whether a given offender will commit another violent crime within 2 years. The training/validation/test split is 2,020\,/\,1,000\,/\,1,000. We randomize this split over multiple trials.

\subsection{Recalibration Performance}

\method{} substantially improves the LCE values. In Figure~\ref{fig:recalibration_mlce}, we plot the MLCE as a function of $\gamma$.  We can see that our method outperforms all baselines (strong global calibration methods) across a wide range of $\gamma$ values on ImageNet. This is true despite the fact that we only implement \method{} for a single $\gamma$. Appendix \ref{appendix:more_results} provides similar results on the Communities \& Crime, CelebA, CIFAR-10, and CIFAR-100 datasets. \added{Note that \method{} works well regardless of the feature map and the dimensionality reduction method (see Section \ref{sec:fairness} for results with both t-SNE and PCA).
Although the results shown in this section use Inception-v3 features, we show similar results in Appendix \ref{appendix:more_results} with AlexNet \citep{AlexNet}, DenseNet121 \citep{DenseNet}, and ResNet101 \citep{ResNet} features.} 

Recall that as $\gamma$ gets large, the MLCE recovers the MCE; because \method{} does well even at large $\gamma$, our method also works well at minimizing global calibration errors. The fact that \method{} lowers the worst-case LCE suggests that it leads to lower LCE values across the entire dataset.

\begin{figure}
    \fontsize{8.5}{10}\selectfont
    \centering
    \begin{tabular}{l|c|c|c|c}
        \toprule
        &  Setting 1    & Setting 2 & Setting 3 & Setting 4 \\
        \midrule
        ECE                 & 0.102 & -0.061    & -0.195    &  0.012\\
        MCE                 & 0.233 & 0.439     & 0.281     & 0.387\\
        NLL                 & 0.542 & 0.045     & -0.287    & 0.051\\
        Brier               & 0.101 & 0.144     & -0.280    & 0.024\\
        \midrule
        MLCE (tSNE) & \textbf{0.642} & \textbf{0.801}     & 0.591 & 0.566    \\
        \added{MLCE (PCA)} & \added{0.639} & \added{0.659}     & \added{\textbf{0.778}}  & \textbf{0.603}   \\
        \bottomrule
    \end{tabular}
    \captionof{table}{Pearson correlation between max group-wise MCE and other calibration metrics (higher is better). Experimental settings as described in Section \ref{sec:fairness}. Best results in \textbf{bold}. We use $\gamma = 0.2$ for MLCE (tSNE) and $\gamma = 0.4$ for MLCE (PCA). MLCE is better-correlated with the max group-wise MCE than any of the global metrics in all settings.}
    \label{tab:correlation}
\end{figure}

\subsection{Downstream Fairness Performance}

\label{sec:fairness}
\begin{figure*}
    \fontsize{8}{10}\selectfont
    \setlength{\tabcolsep}{3pt}
    \centering
    \begin{tabular}{l|ccc|ccc|ccc|ccc}
        \toprule
        Recalibration method &
        \multicolumn{3}{c|}{Setting 1} & \multicolumn{3}{c|}{Setting 2} & \multicolumn{3}{c|}{Setting 3} & \multicolumn{3}{c}{Setting 4} \\
        & ECE(\%) & NLL & Brier & ECE(\%) & NLL & Brier & ECE(\%) & NLL & Brier & ECE(\%) & NLL & Brier\\
        \midrule
        No recalibration
        & $15.1_{2.7}$ & $.96_{.25}$ & $.17_{.02}$
        & $\mathbf{1.8_{0.3}}$ & $\mathbf{.617_{.004}}$ & $.641_{.004}$ 
        & $1.1_{0.3}$ & $.782_{.006}$ & $.571_{.006}$
        & $3.6_{1.4}$ & $.408_{.023}$ & $\mathbf{.124_{.008}}$
        \\

        Temperature scaling
        & $4.9_{1.7}$ & $.43_{.03}$ & $.14_{.01}$
        & $2.0_{0.3}$ & $.619_{.004}$ & $.622_{.003}$ 
        & $\mathbf{1.0_{0.2}}$ & $\mathbf{.781_{.006}}$ & $.569_{.002}$
        & $2.9_{1.2}$ & $\mathbf{.405_{.021}}$ & $\mathbf{.124_{.007}}$
        \\

        Histogram binning
        & $\mathbf{3.3_{1.1}}$ & $.48_{.03}$ & $.15_{0.1}$
        & $2.5_{0.2}$ & $.619_{.004}$ & $.614_{.003}$ 
        & $2.5_{0.4}$ & $.788_{.006}$ & $.552_{.002}$
        & $2.7_{1.1}$ & $.414_{.023}$ & $.126_{.008}$
        \\

        Isotonic regression
        & $30.6_{2.3}$ & $.79_{.05}$ & $.30_{.02}$
        & $2.6_{0.2}$ & $.618_{.004}$ & $.615_{.003}$ 
        & $2.4_{0.2}$ & $.785_{.006}$ & $.553_{.002}$
        & $34.4_{1.7}$ & $.707_{.021}$ & $.253_{.004}$
        \\

        MMCE optimization
        & $4.4_{1.3}$ & $.43_{.03}$ & $.14_{.01}$
        & $3.8_{0.7}$ & $.646_{.014}$ & $.679_{.012}$ 
        & $5.4_{0.8}$ & $.808_{.009}$ & $.619_{.009}$
        & $3.4_{1.9}$ & $.415_{.039}$ & $.125_{.009}$
        \\

        \midrule
        \method{} (tSNE) (ours)
        & $3.5_{1.1}$ & $\mathbf{.42_{.02}}$ & $\mathbf{.13_{.01}}$
        & $2.8_{0.2}$ & $.623_{.004}$ & $.613_{.003}$ 
        & $2.6_{0.3}$ & $.792_{.006}$ & $.551_{.002}$
        & $2.7_{1.1}$ & $.410_{.022}$ & $.125_{.007}$
        \\
        
        \added{\method{} (PCA) (ours)}
        & \added{$4.5_{1.4}$} & \added{$.44_{.02}$} & \added{$.14_{.01}$}
        & \added{$3.1_{0.2}$} & \added{$.628_{.004}$} & \added{$\mathbf{.606_{.003}}$}
        & \added{$2.8_{0.4}$} & \added{$.792_{.007}$} & \added{$\mathbf{.538_{.002}}$}
        & $\mathbf{2.5_{1.0}}$ & $.410_{.021}$ & $.126_{.008}$
        \\
        \bottomrule
    \end{tabular}
    \captionof{table}{
    Performance on global calibration metrics, formatted as $\text{mean}_{\text{sd}}$. Lower is better.
    Experimental settings as described in Section \ref{sec:fairness}.
    Best results are \textbf{bold}. Across all settings, \method{} generally achieves a global calibration error that is comparable to the baselines.
    } 
    \label{tab:global}
\end{figure*}

\paragraph{Experimental Setup}
In many fairness-related applications, it is important to show that a model is well-calibrated for all sensitive subgroups of a given population. For example, when predicting the crime rate of a neighborhood, a model should not be considered well-calibrated if it consistently underestimates the crime rate for neighborhoods of one demographic, while overestimating the crime rate for neighborhoods of a different demographic.
Therefore, in this section, we examine the worst-case group-wise miscalibration of a classifier, as measured by the maximum group-wise MCE when evaluated only on sensitive sub-groups. 
We consider the following experimental settings: 
\begin{enumerate}[noitemsep,topsep=0pt]
    \item UCI Communities and Crime: Predict whether a neighborhood's crime rate is higher than the median; group neighborhoods by their plurality race (White, Black, Asian, Indian, Hispanic). 60 random seeds for model training.
    \item CelebA: Predict a person's hair color (bald, black, blond, brown, gray, other); group people by hair type (bald, receding hairline, bangs, straight, wavy, other). 20 random seeds for model training.
    \item CelebA: Predict a person's hair type; group people by their hair color; inverse of Setting 2. 20 random seeds for model training.
    \item COMPAS Criminal Recidivism: Predict whether an individual commits another violent crime within 2 years; group individuals based on race. 60 random seeds for model training.
\end{enumerate}

For each task, we train a classifier (see Appendix \ref{appendix:details} for full details) and recalibrate its output confidences using each of the recalibration methods.

\paragraph{Results}
Table \ref{tab:fairness} reports the maximum group-wise MCE for each of the recalibration methods on each of the three tasks. \method{} outperforms the other baselines, achieving an average 50\% reduction over no recalibration and an average 24\% improvement over the next best global recalibration method. (Figures \ref{fig:setting1_mlce}, \ref{fig:setting2_mlce}, \ref{fig:setting3_mlce}, and \ref{fig:setting4_mlce} in Appendix \ref{appendix:more_results} show that \method{} is also the best method of lowering the MLCE over a wide range of $\gamma$). \added{Notably, \method{} is robust to the feature map used (tSNE vs.\ PCA). It even outperforms global methods applied to each individual group, implying that correcting local calibration errors is a robust way to improve group calibration that generalizes better than naive alternatives.}

Moreover, Table \ref{tab:correlation} shows that the maximum group-wise MCE is well-correlated with the MLCE, and it is in fact {\em much} better correlated with MLCE than global calibration metrics.
Taken together, our results indicate that lowering the LCE has positive implications in fairness settings that cannot be achieved by simply lowering global metrics like the ECE.
For reference, we also include the performance of all methods on various global calibration metrics in Table~\ref{tab:global}, which shows that \method{} is able to improve worst-case group-wise calibration without meaningfully sacrificing (and in some cases improving) average-case global calibration. 

\subsection{Downstream Decision-Making}

\paragraph{Experimental Setup}
Machine learning predictions are often used to make decisions, and in many situations an agent must select a best action in expectation. As an example, suppose there is a low cost $u$ associated with returning ``unsure'' and a high cost $w$ associated with returning an incorrect classification (e.g., in situations such as autonomous driving, being unsure incurs only the small cost of calling a human operator, but making an incorrect classification incurs a high cost). An agent with good uncertainty quantification can make a more optimal decision about whether to return a classification or return ``unsure''; for a calibrated model, it would be optimal for the agent to return ``unsure'' below the confidence threshold of $1 - u / w$, and return a prediction above this threshold. 

Following this policy --- i.e., returning ``unsure'' when the confidence is below this threshold and returning a prediction when the confidence is above it, we used a ResNet-50 model to make predictions on ImageNet, and recalibrated the predictions with each of the recalibration methods. For each method, we then calculated the total reward attained under various reward ratios $w/u$, as well as various global calibration metrics.

\begin{figure}
    \centering
    \includegraphics[width=0.8\linewidth, trim={0cm 0cm 0cm 0cm},clip]{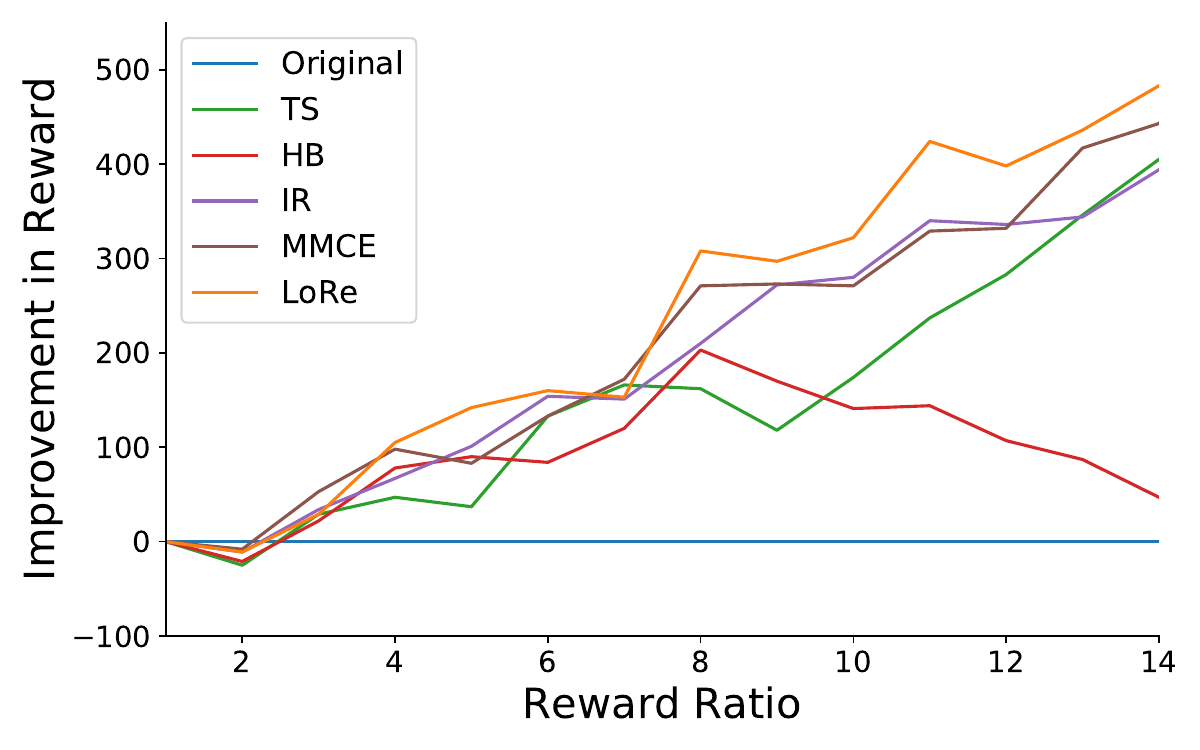}
    \caption{Reward attained vs.\ reward ratio for the ImageNet dataset (higher is better). \method{} achieves the highest rewards across a wide range of reward ratios.}
    \label{fig:reward}
\end{figure}

\paragraph{Results}
In Figure~\ref{fig:reward}, we show the improvement in the total reward over the original classifier (i.e., no recalibration) as a function of the reward ratio $w/u$ (the ratio of the cost of an incorrect classification to the cost of being unsure). Across a wide range of reward ratios, \method{} achieves the highest reward. The MLCE curves for this task are shown in Figure~\ref{fig:recalibration_mlce}; note that \method{} also achieves lower LCE values than the global recalibration methods. Table~\ref{tab:decision_making_expanded} reports several global calibration metrics; \method{} achieves strong global calibration. These results indicate that our recalibration method most effectively lowers LCE values without sacrificing (and indeed often improving) average-case global calibration, and that these lower LCE values correspond to better performance on this decision-making task. 

\begin{figure}
    \small
    \centering
	\begin{tabular}{l|ccc}
		\toprule 
		Recalibration method   
		& ECE & NLL & Brier      \\
        \midrule
        No recalibration	  
        &  0.037 & 0.959 &      40.64  \\
        Temperature scaling	 
        &  0.022 & 0.948 &     40.60 \\
        Histogram binning	  
        & 0.012 & 0.952 &    40.59  \\
        Isotonic regression   
        & 0.011 & \textbf{0.945}  &    40.59   \\
        MMCE optimization  
        & 0.061 &  0.965 &     40.67   \\
        \midrule
        \method{} (ours)        
        & \textbf{0.007} & 0.955 &  \textbf{40.58}  \\
		\bottomrule
	\end{tabular}
	\captionof{table}{
	Performance on global calibration metrics on ImageNet. Lower is better.
	Best results are \textbf{bold}. \method{} achieves strong global calibration according to all metrics.}
	\label{tab:decision_making_expanded} 
\end{figure}

\section{Conclusion}

In this paper, we introduce the local calibration error (LCE), a metric that measures calibration in a localized neighborhood around a prediction. The LCE spans the gap between fully global and fully individualized calibration error, with an effective neighborhood size that can be set with a bandwidth parameter $\gamma$. We also introduce \method{}, a recalibration method that greatly improves the local calibration. Finally, we demonstrate that achieving lower LCE values leads to better performance on downstream fairness and decision-making tasks. In future work, we hope to further explore alternative feature spaces to define similarity, since the quality of our metric depends on the quality of the feature space underpinning the notion of locality.

\bibliography{luo_333}

\begin{thebibliography}{29}
\providecommand{\natexlab}[1]{#1}
\providecommand{\url}[1]{\texttt{#1}}
\expandafter\ifx\csname urlstyle\endcsname\relax
  \providecommand{\doi}[1]{doi: #1}\else
  \providecommand{\doi}{doi: \begingroup \urlstyle{rm}\Url}\fi

\bibitem[Chen et~al.(2020)Chen, Kornblith, Norouzi, and Hinton]{chen20-simple}
Ting Chen, Simon Kornblith, Mohammad Norouzi, and Geoffrey Hinton.
\newblock A simple framework for contrastive learning of visual
  representations.
\newblock In Hal~Daumé III and Aarti Singh, editors, \emph{Proceedings of the
  37th International Conference on Machine Learning}, volume 119 of
  \emph{Proceedings of Machine Learning Research}, pages 1597--1607. PMLR,
  13--18 Jul 2020.
\newblock URL \url{http://proceedings.mlr.press/v119/chen20j.html}.

\bibitem[Deng et~al.(2009)Deng, Dong, Socher, Li, Li, and
  Fei-Fei]{deng2009imagenet}
Jia Deng, Wei Dong, Richard Socher, Li-Jia Li, Kai Li, and Li~Fei-Fei.
\newblock Imagenet: A large-scale hierarchical image database.
\newblock In \emph{2009 IEEE conference on computer vision and pattern
  recognition}, pages 248--255. Ieee, 2009.

\bibitem[Dieterich et~al.(2016)Dieterich, Mendoza, and Brennan]{compas2016}
William Dieterich, Christina Mendoza, and Tim Brennan.
\newblock Compas risk scales: Demonstrating accuracy equity and predictive
  parity.
\newblock 2016.

\bibitem[Dua and Graff(2017)]{dua-uci}
Dheeru Dua and Casey Graff.
\newblock {UCI} machine learning repository, 2017.
\newblock URL \url{http://archive.ics.uci.edu/ml}.

\bibitem[Dwork et~al.(2012)Dwork, Hardt, Pitassi, Reingold, and
  Zemel]{dwork2012fairness}
Cynthia Dwork, Moritz Hardt, Toniann Pitassi, Omer Reingold, and Richard Zemel.
\newblock Fairness through awareness.
\newblock In \emph{Proceedings of the 3rd Innovations in Theoretical Computer
  Science Conference}, ITCS '12, page 214–226, New York, NY, USA, 2012.
  Association for Computing Machinery.
\newblock ISBN 9781450311151.
\newblock \doi{10.1145/2090236.2090255}.

\bibitem[Guo et~al.(2017)Guo, Pleiss, Sun, and Weinberger]{guo17-ts}
Chuan Guo, Geoff Pleiss, Yu~Sun, and Kilian~Q. Weinberger.
\newblock On calibration of modern neural networks.
\newblock In Doina Precup and Yee~Whye Teh, editors, \emph{Proceedings of the
  34th International Conference on Machine Learning}, volume~70 of
  \emph{Proceedings of Machine Learning Research}, pages 1321--1330,
  International Convention Centre, Sydney, Australia, 06--11 Aug 2017. PMLR.
\newblock URL \url{http://proceedings.mlr.press/v70/guo17a.html}.

\bibitem[Gupta et~al.(2020)Gupta, Rahimi, Ajanthan, Mensink, Sminchisescu, and
  Hartley]{gupta2020calibration}
Kartik Gupta, Amir Rahimi, Thalaiyasingam Ajanthan, Thomas Mensink, Cristian
  Sminchisescu, and Richard Hartley.
\newblock Calibration of neural networks using splines.
\newblock 2020.

\bibitem[He et~al.(2015)He, Zhang, Ren, and Sun]{ResNet}
Kaiming He, Xiangyu Zhang, Shaoqing Ren, and Jian Sun.
\newblock Deep residual learning for image recognition.
\newblock 2015.

\bibitem[H{\'{e}}bert{-}Johnson et~al.(2017)H{\'{e}}bert{-}Johnson, Kim,
  Reingold, and Rothblum]{hebert-johnson17-groupcal}
{\'{U}}rsula H{\'{e}}bert{-}Johnson, Michael~P. Kim, Omer Reingold, and Guy~N.
  Rothblum.
\newblock Calibration for the (computationally-identifiable) masses, 2017.
\newblock URL \url{http://arxiv.org/abs/1711.08513}.

\bibitem[Huang et~al.(2018)Huang, Liu, van~der Maaten, and
  Weinberger]{DenseNet}
Gao Huang, Zhuang Liu, Laurens van~der Maaten, and Kilian~Q. Weinberger.
\newblock Densely connected convolutional networks.
\newblock 2018.

\bibitem[Huh et~al.(2016)Huh, Agrawal, and Efros]{huh16makes}
Minyoung Huh, Pulkit Agrawal, and Alexei~A. Efros.
\newblock What makes imagenet good for transfer learning?, 2016.
\newblock URL \url{http://arxiv.org/abs/1608.08614}.

\bibitem[Kleinberg et~al.(2016)Kleinberg, Mullainathan, and
  Raghavan]{kleinberg16-groupcal}
Jon~M. Kleinberg, Sendhil Mullainathan, and Manish Raghavan.
\newblock Inherent trade-offs in the fair determination of risk scores, 2016.
\newblock URL \url{http://arxiv.org/abs/1609.05807}.

\bibitem[Krizhevsky(2014)]{AlexNet}
Alex Krizhevsky.
\newblock One weird trick for parallelizing convolutional neural networks.
\newblock \emph{CoRR}, abs/1404.5997, 2014.
\newblock URL \url{http://arxiv.org/abs/1404.5997}.

\bibitem[Kull et~al.(2019)Kull, Perello~Nieto, K\"{a}ngsepp, Silva~Filho, Song,
  and Flach]{kull19-dirichlet}
Meelis Kull, Miquel Perello~Nieto, Markus K\"{a}ngsepp, Telmo Silva~Filho, Hao
  Song, and Peter Flach.
\newblock Beyond temperature scaling: Obtaining well-calibrated multi-class
  probabilities with dirichlet calibration.
\newblock In H.~Wallach, H.~Larochelle, A.~Beygelzimer, F.~d\textquotesingle
  Alch\'{e}-Buc, E.~Fox, and R.~Garnett, editors, \emph{Advances in Neural
  Information Processing Systems}, volume~32, pages 12316--12326. Curran
  Associates, Inc., 2019.
\newblock URL
  \url{https://proceedings.neurips.cc/paper/2019/file/8ca01ea920679a0fe3728441494041b9-Paper.pdf}.

\bibitem[Kumar et~al.(2018)Kumar, Sarawagi, and Jain]{kumar18a-mmce}
Aviral Kumar, Sunita Sarawagi, and Ujjwal Jain.
\newblock Trainable calibration measures for neural networks from kernel mean
  embeddings.
\newblock In Jennifer Dy and Andreas Krause, editors, \emph{Proceedings of the
  35th International Conference on Machine Learning}, volume~80 of
  \emph{Proceedings of Machine Learning Research}, pages 2805--2814,
  Stockholmsmässan, Stockholm Sweden, 10--15 Jul 2018. PMLR.
\newblock URL \url{http://proceedings.mlr.press/v80/kumar18a.html}.

\bibitem[Li et~al.(2020)Li, Zhou, He, Wang, Yang, and Li]{li20-sentence}
Bohan Li, Hao Zhou, Junxian He, Mingxuan Wang, Yiming Yang, and Lei Li.
\newblock On the sentence embeddings from pre-trained language models.
\newblock In \emph{Proceedings of the 2020 Conference on Empirical Methods in
  Natural Language Processing (EMNLP)}, pages 9119--9130. Association for
  Computational Linguistics, November 2020.
\newblock \doi{10.18653/v1/2020.emnlp-main.733}.
\newblock URL \url{https://www.aclweb.org/anthology/2020.emnlp-main.733}.

\bibitem[Liu et~al.(2015)Liu, Luo, Wang, and Tang]{liu2015faceattributes}
Ziwei Liu, Ping Luo, Xiaogang Wang, and Xiaoou Tang.
\newblock Deep learning face attributes in the wild.
\newblock In \emph{Proceedings of International Conference on Computer Vision
  (ICCV)}, December 2015.

\bibitem[Naeini et~al.(2015)Naeini, Cooper, and
  Hauskrecht]{naeini2015obtaining}
Mahdi~Pakdaman Naeini, Gregory~F Cooper, and Milos Hauskrecht.
\newblock Obtaining well calibrated probabilities using bayesian binning.
\newblock In \emph{AAAI}, page 2901{\textendash}2907, 2015.

\bibitem[Nixon et~al.(2019)Nixon, Dusenberry, Zhang, Jerfel, and
  Tran]{Nixon2019MeasuringCI}
Jeremy Nixon, Michael~W. Dusenberry, Linchuan Zhang, Ghassen Jerfel, and Dustin
  Tran.
\newblock Measuring calibration in deep learning.
\newblock \emph{ArXiv}, abs/1904.01685, 2019.

\bibitem[Platt(1999)]{platt99}
John~C. Platt.
\newblock Probabilistic outputs for support vector machines and comparisons to
  regularized likelihood methods.
\newblock In \emph{Advances in Large Margin Classifiers}, pages 61--74. MIT
  Press, 1999.

\bibitem[Pleiss et~al.(2017)Pleiss, Raghavan, Wu, Kleinberg, and
  Weinberger]{pleiss17-fairness}
Geoff Pleiss, Manish Raghavan, Felix Wu, Jon Kleinberg, and Kilian~Q
  Weinberger.
\newblock On fairness and calibration.
\newblock In I.~Guyon, U.~V. Luxburg, S.~Bengio, H.~Wallach, R.~Fergus,
  S.~Vishwanathan, and R.~Garnett, editors, \emph{Advances in Neural
  Information Processing Systems}, volume~30, pages 5680--5689. Curran
  Associates, Inc., 2017.
\newblock URL
  \url{https://proceedings.neurips.cc/paper/2017/file/b8b9c74ac526fffbeb2d39ab038d1cd7-Paper.pdf}.

\bibitem[Salimans et~al.(2016)Salimans, Goodfellow, Zaremba, Cheung, Radford,
  and Chen]{Salimans2016ImprovedTF}
Tim Salimans, Ian~J. Goodfellow, Wojciech Zaremba, Vicki Cheung, Alec Radford,
  and Xi~Chen.
\newblock Improved techniques for training gans.
\newblock In \emph{NIPS}, 2016.

\bibitem[Stuart(2010)]{stuart2010matching}
Elizabeth~A. Stuart.
\newblock Matching methods for causal inference: A review and a look forward.
\newblock \emph{Statistical science : a review journal of the Institute of
  Mathematical Statistics}, 25\penalty0 (1):\penalty0 1--21, 2010.
\newblock \doi{10.1214/09-STS313}.

\bibitem[Widmann et~al.(2019)Widmann, Lindsten, and Zachariah]{widmann19-kce}
David Widmann, Fredrik Lindsten, and Dave Zachariah.
\newblock Calibration tests in multi-class classification: A unifying
  framework.
\newblock In H.~Wallach, H.~Larochelle, A.~Beygelzimer, F.~d\textquotesingle
  Alch\'{e}-Buc, E.~Fox, and R.~Garnett, editors, \emph{Advances in Neural
  Information Processing Systems}, volume~32. Curran Associates, Inc., 2019.
\newblock URL
  \url{https://proceedings.neurips.cc/paper/2019/file/1c336b8080f82bcc2cd2499b4c57261d-Paper.pdf}.

\bibitem[Zadrozny and Elkan(2001)]{zadrozny01-hb}
Bianca Zadrozny and Charles Elkan.
\newblock Obtaining calibrated probability estimates from decision trees and
  naive bayesian classifiers.
\newblock In \emph{In Proceedings of the Eighteenth International Conference on
  Machine Learning}, pages 609--616. Morgan Kaufmann, 2001.

\bibitem[Zadrozny and Elkan(2002)]{zadrozny02-isotonic}
Bianca Zadrozny and Charles Elkan.
\newblock Transforming classifier scores into accurate multiclass probability
  estimates.
\newblock In \emph{SIGKDD}, 2002.

\bibitem[Zhang et~al.(2020)Zhang, Kailkhura, and Han]{zhang2020mixnmatch}
Jize Zhang, Bhavya Kailkhura, and T.~Yong-Jin Han.
\newblock Mix-n-match: Ensemble and compositional methods for uncertainty
  calibration in deep learning.
\newblock 2020.

\bibitem[Zhao et~al.(2020)Zhao, Ma, and Ermon]{Zhao2020IndividualCW}
Shengjia Zhao, Tengyu Ma, and S.~Ermon.
\newblock Individual calibration with randomized forecasting.
\newblock In \emph{ICML}, 2020.

\bibitem[Zhao et~al.(2021)Zhao, Kim, Sahoo, Ma, and Ermon]{zhao2021calibrating}
Shengjia Zhao, Michael~P. Kim, Roshni Sahoo, Tengyu Ma, and Stefano Ermon.
\newblock Calibrating predictions to decisions: A novel approach to multi-class
  calibration, 2021.

\end{thebibliography}

\clearpage
\newpage
\appendix

\onecolumn

\newtheorem{assumption}{Assumption}
\renewcommand{\theassumption}{\Alph{assumption}}
\newcommand{\set}[1]{{\left\{ #1 \right\}}}
\renewcommand{\abs}[1]{{\left| #1 \right|}}
\newcommand{\paren}[1]{{\left( #1 \right)}}
\newcommand{\brac}[1]{{\left[ #1 \right]}}
\newcommand{\mc}[1]{\mathcal{#1}}
\newcommand{\mb}[1]{\mathbf{#1}}
\renewcommand{\P}{\mathbb{P}}
\newcommand{\indic}[1]{\mathds{1}\left[#1\right]} 
\newcommand{\slce}{\mathrm{SLCE}}
\newcommand{\slcehat}{\widehat{\mathrm{SLCE}}}
\renewcommand{\norm}[1]{\left\|{#1}\right\|} 
\newcommand{\wt}{\widetilde}

\section{Model Architecture, Training, and Other Hyperparameters}
\label{appendix:details}
For ImageNet and CelebA, we compute the ECE, MCE, and LCE using 15 equal-width confidence bins. For the UCI communities and crime dataset, we use 5 equal-width bins because the dataset is much smaller (500 datapoints for recalibration). These numbers of bins represent a good tradeoff between bias and variance in estimating the relevant calibration errors. We also ran some initial experiments with equal-mass binning, but found that the results were very similar to those obtained with equal-width binning.

\subsection{ImageNet}
For all experiments with the ImageNet dataset, we used the pre-trained ResNet-50 model from the PyTorch \texttt{torchvision} package as our classifier. To calculate the LCE and apply \method{}, we used pre-trained Inception-v3 features, applying either t-SNE to reduce their dimension to 3 \added{or PCA to reduce their dimension to 50}, as a feature representation for the kernel.

\subsection{UCI Communities and Crime}
For all experiments with the UCI communities and crime dataset, we used a 3-hidden-layer dense neural network as our base classifier. Each hidden layer had a width of 100 and was followed by a Leaky ReLU activation. We applied dropout with probability 0.4 after the final hidden layer. We trained the model using the Adam optimizer with a batch size of 64 and a learning rate of $3 \times 10^{-4}$ until the validation accuracy stopped improving. All other hyperparameters were PyTorch defaults. Training was done locally on a laptop CPU. We trained 60 different models with different random seeds to perform the experiments described in Section \ref{sec:fairness} and Figure \ref{fig:setting1_mlce}.
To calculate the LCE and apply \method{}, we used the final hidden layer representation learned by our model, applying t-SNE to reduce the dimension to 2 \added{or PCA to reduce their dimension to 20}, as a feature representation for the kernel.

\subsection{CelebA}
For all experiments with the CelebA dataset, we trained a ResNet50 model and used it as our base classifier. We applied standard data augmentation to our training data (random crops \& random horizontal flips), and trained all models for 10 epochs 
using the Adam optimizer with a learning rate of $1 \times 10^{-3}$ and a batch size of 256. 
All other hyperparameters were PyTorch defaults. Training was distributed over 4 GPUs, and training a single model took about 30 minutes. For both Setting 2 and Setting 3 (described in Section~\ref{sec:fairness}), we trained 20 models with different random seeds to perform the experiments shown in Figures~\ref{fig:setting2_mlce} and ~\ref{fig:setting3_mlce}.
To calculate the LCE and apply \method{}, we used pre-trained Inception-v3 features, applying t-SNE to reduce their dimension to 2 \added{or PCA to reduce their dimension to 50}, as a feature representation for the kernel.

\subsection{COMPAS Criminal Recidivism}
For all experiments with the COMPAS criminal recidivism dataset, we used a 3-hidden-layer dense neural network as our base classifier. Each hidden layer had a width of 100 and was followed by a Leaky ReLU activation. We applied dropout with probability 0.4 after the final hidden layer. We trained the model using the Adam optimizer with a batch size of 64 and a learning rate of $3 \times 10^{-4}$ until the validation accuracy stopped improving. All other hyperparameters were PyTorch defaults. Training was done locally on a laptop CPU. We trained 60 different models with different random seeds to perform the experiments described in Section \ref{sec:fairness} and Figure \ref{fig:setting1_mlce}.
To calculate the LCE and apply \method{}, we used the final hidden layer representation learned by our model, applying t-SNE to reduce the dimension to 2 \added{or PCA to reduce their dimension to 20}, as a feature representation for the kernel.

\section{Additional Experimental Results}
\label{appendix:more_results}
In Figures \ref{fig:setting1_mlce}, \ref{fig:setting2_mlce}, \ref{fig:setting3_mlce}, and \ref{fig:setting4_mlce} we visualize the MLCE achieved by all recalibration methods for the three experimental settings evaluated in Section \ref{sec:fairness}. Figure \ref{fig:recalibration_mlce} in the main paper shows the same visualization for all methods on ImageNet. \added{In Figure \ref{fig:cifar100}, we plot the MLCE achieved by all recalibration methods for CIFAR-100, and in Figure \ref{fig:cifar10}, we do the same for CIFAR-10.} Across all settings and datasets, our method \method{} is the most effective at minimizing MLCE across a wide range of $\gamma$, even accounting for variations between runs. 

In these figures, ``Original'' represents no recalibration, ``TS'' represents temperature scaling, ``HB'' represents histogram binning, ``IR'' represents isotonic regression, ``MMCE'' represents direct MMCE optimization, and ``LoRe'' is our method. 

\added{Next, we examine the influence of the specific feature map used. In Figures \ref{fig:mlce_inceptionv3}, \ref{fig:mlce_alexnet}, \ref{fig:mlce_densenet121}, and \ref{fig:mlce_resnet101}, we plot the MLCE achieved by all recalibration methods for ImageNet using Inception-v3, AlexNet, DenseNet121, and ResNet101 features. In Figures \ref{fig:mlce_inceptionv3_lore_alexnet} and \ref{fig:mlce_densenet121_lore_alexnet}, we plot the MLCE achieved by all recalibration methods for ImageNet when the features used to calculate the MLCE are different from the features used by \method{}. For completeness, in Figures \ref{fig:elce_imagenet}, \ref{fig:elce_setting1}, \ref{fig:elce_setting2}, and \ref{fig:elce_setting3}, we also visualize the average LCE for all experimental settings. All plots show similar results: \method{} performs best over a wide range of $\gamma$.}

\begin{figure}[H]
    \centering
    \includegraphics[width=0.48\textwidth]{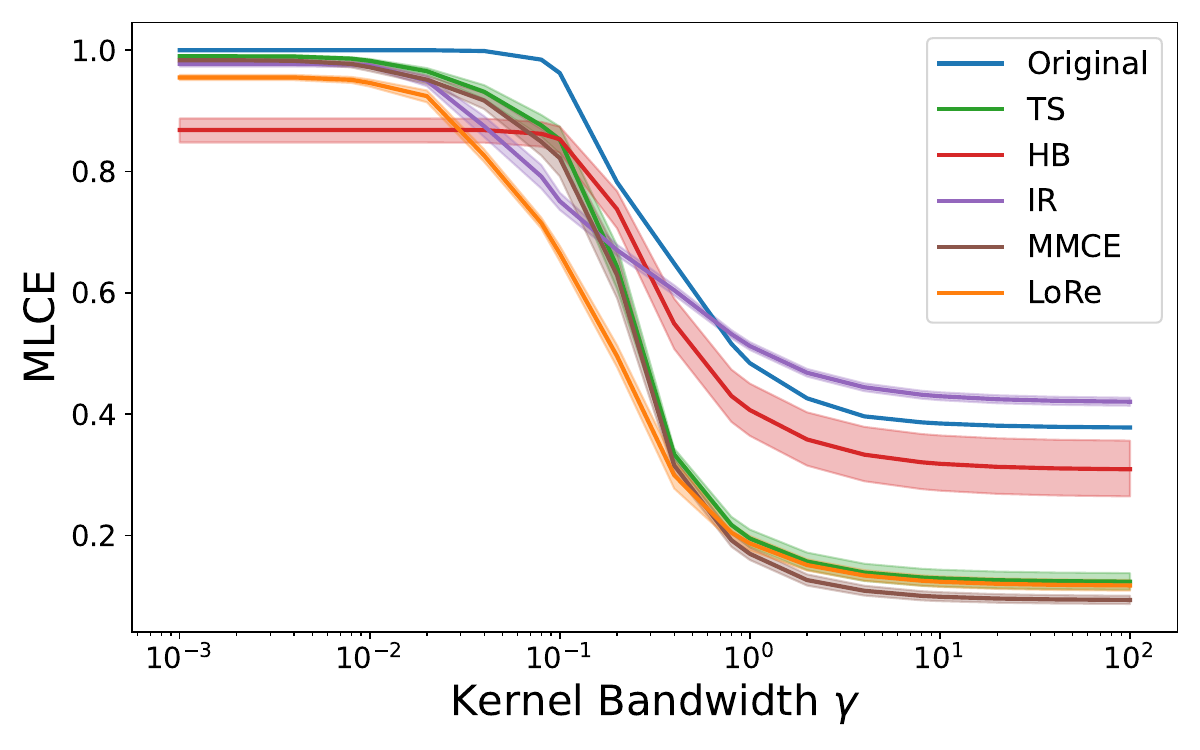}
    \hfill
    \includegraphics[width=0.48\textwidth]{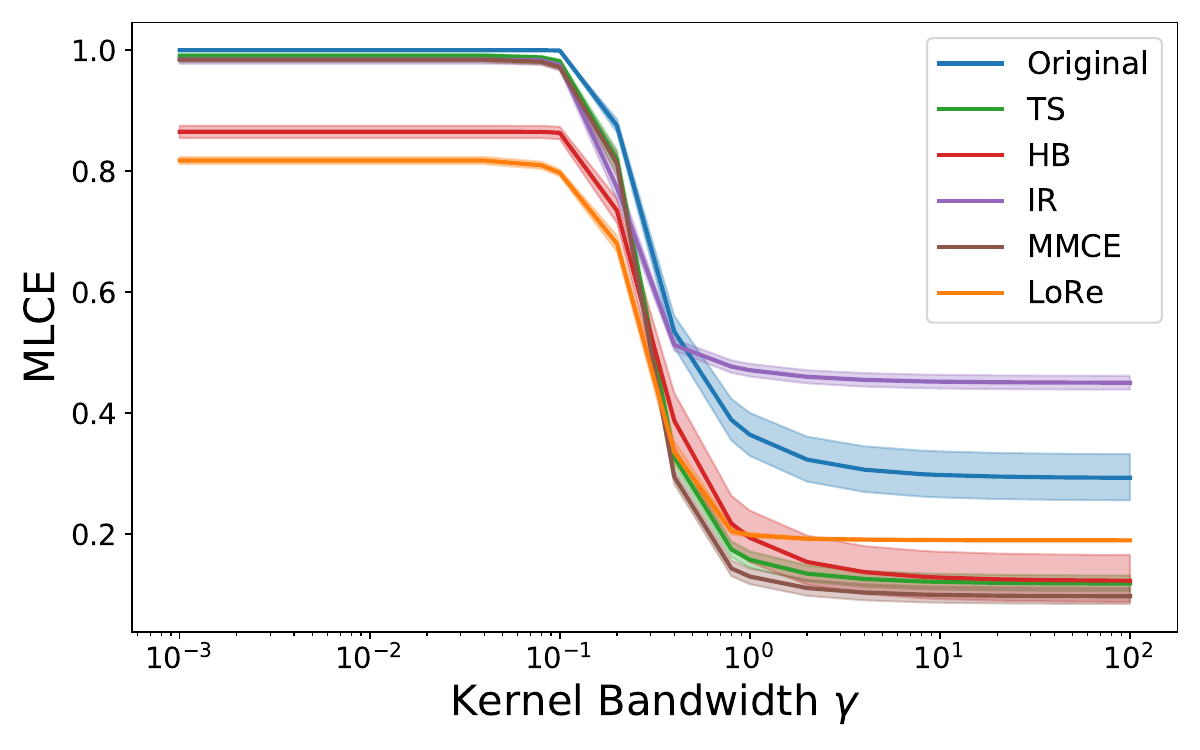}
    \caption{MLCE vs.\ kernel bandwidth $\gamma$ for all methods on task 1 of Section \ref{sec:fairness}, predicting whether a neighborhood's crime rate is higher than the median. \method{} achieves the best (or competitive) MLCE for most $\gamma$. Left: 2D t-SNE features. Right: 20D PCA features.}
    \label{fig:setting1_mlce}
\end{figure}

\begin{figure}[H]
    \centering
    \includegraphics[width=0.48\textwidth]{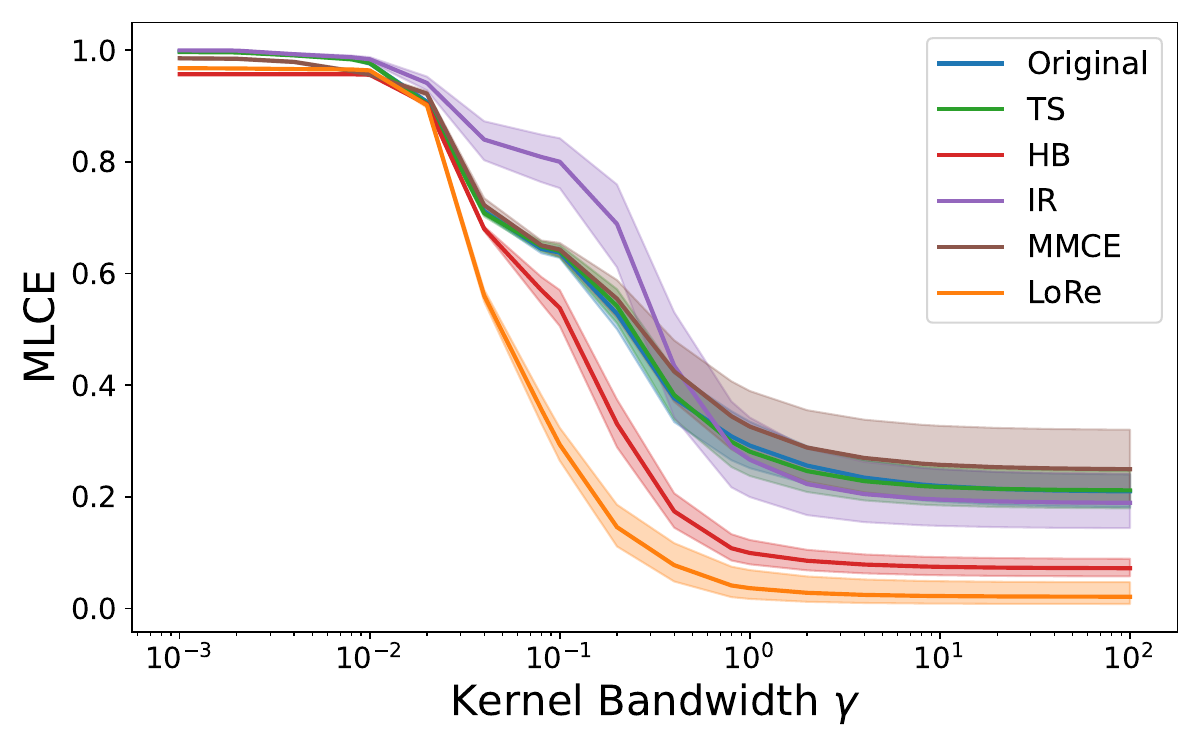}
    \hfill
    \includegraphics[width=0.48\textwidth]{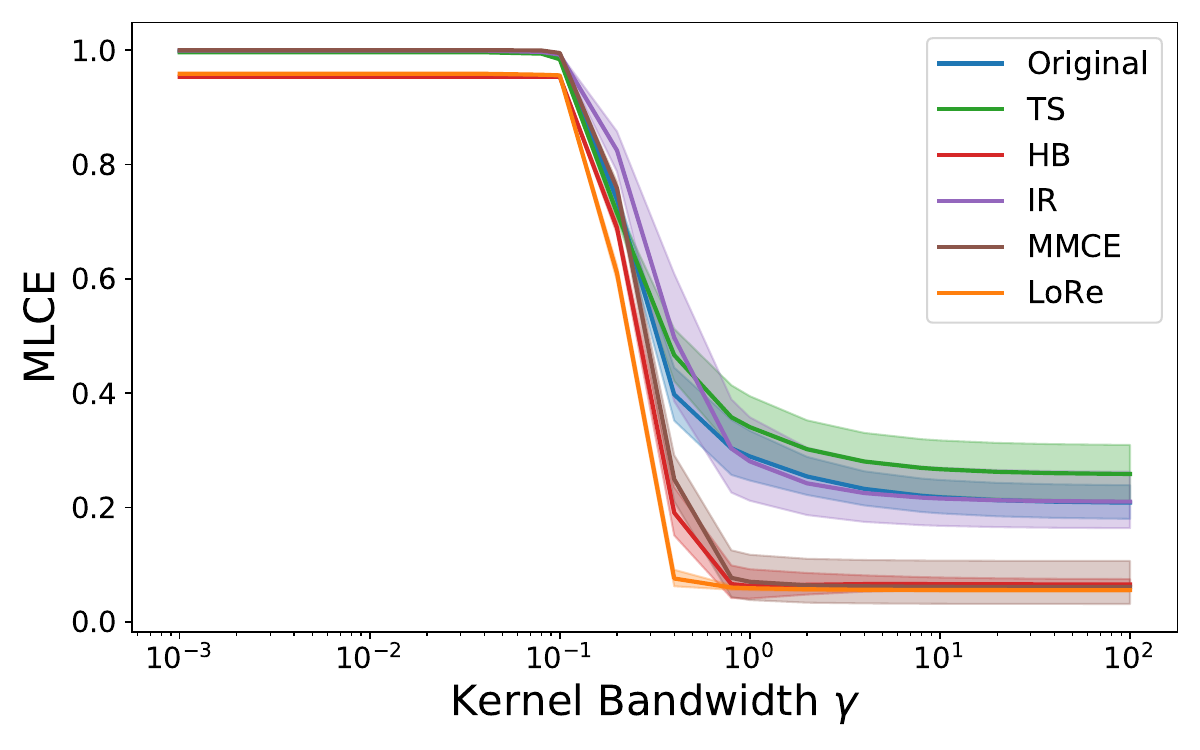}
    \captionof{figure}{MLCE vs.\ kernel bandwidth $\gamma$ for all methods on task 2 of Section \ref{sec:fairness}, predicting hair color on CelebA. \method{} achieves the best MLCE for virtually all values of $\gamma$. Left: 2D t-SNE features. Right: 50D PCA features.}
    \label{fig:setting2_mlce}
\end{figure}

\begin{figure}[h]
    \centering
    \includegraphics[width=0.48\textwidth]{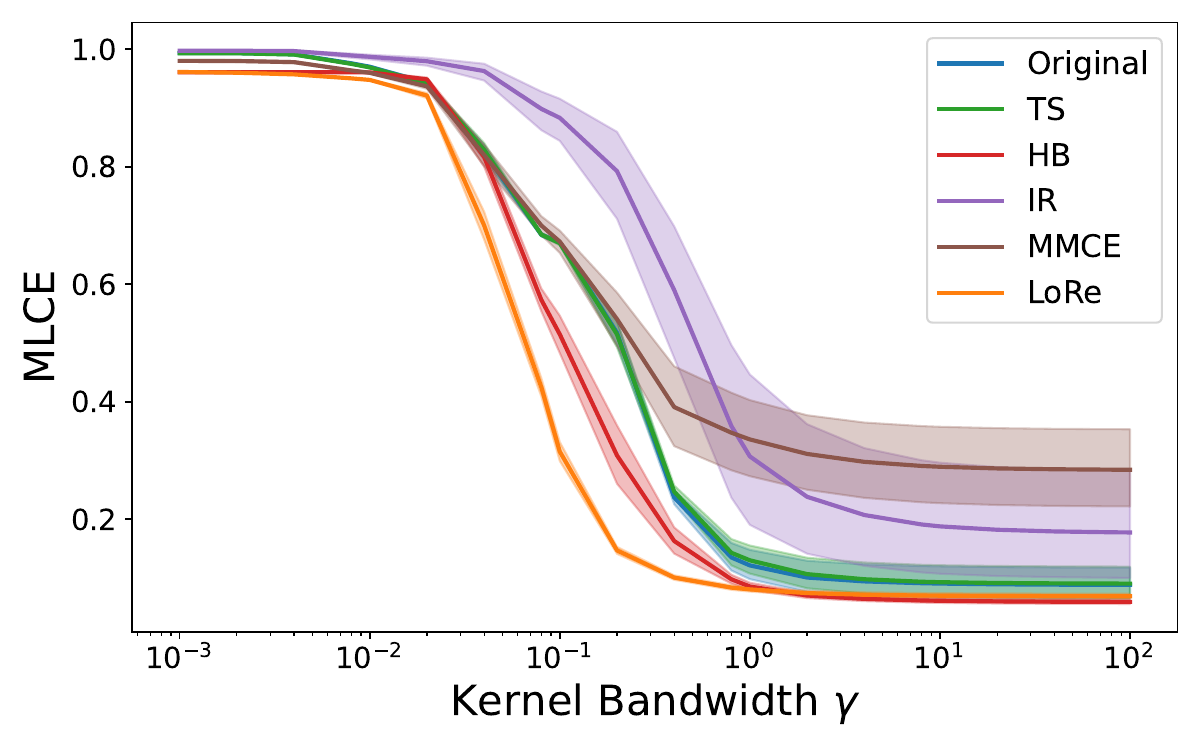}
    \hfill
    \includegraphics[width=0.48\textwidth]{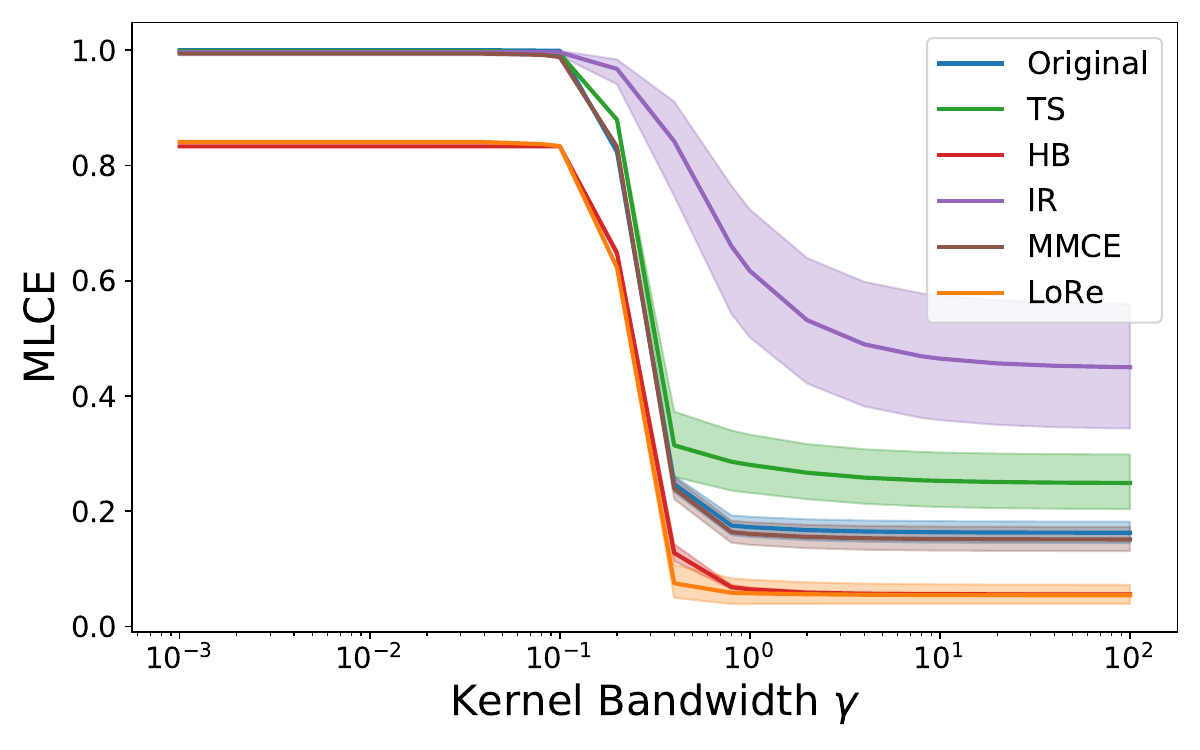}
    \captionof{figure}{MLCE vs.\ kernel bandwidth for all methods on task 3 of Section \ref{sec:fairness}, predicting hair type on CelebA. \method{} achieves the best MLCE for all $\gamma < 1$ and is tied with histogram binning for $\gamma > 1$. Left: 2D t-SNE features. Right: 50D PCA features.}
    \label{fig:setting3_mlce}
\end{figure}

\begin{figure}[H]
    \centering
    \includegraphics[width=0.48\textwidth]{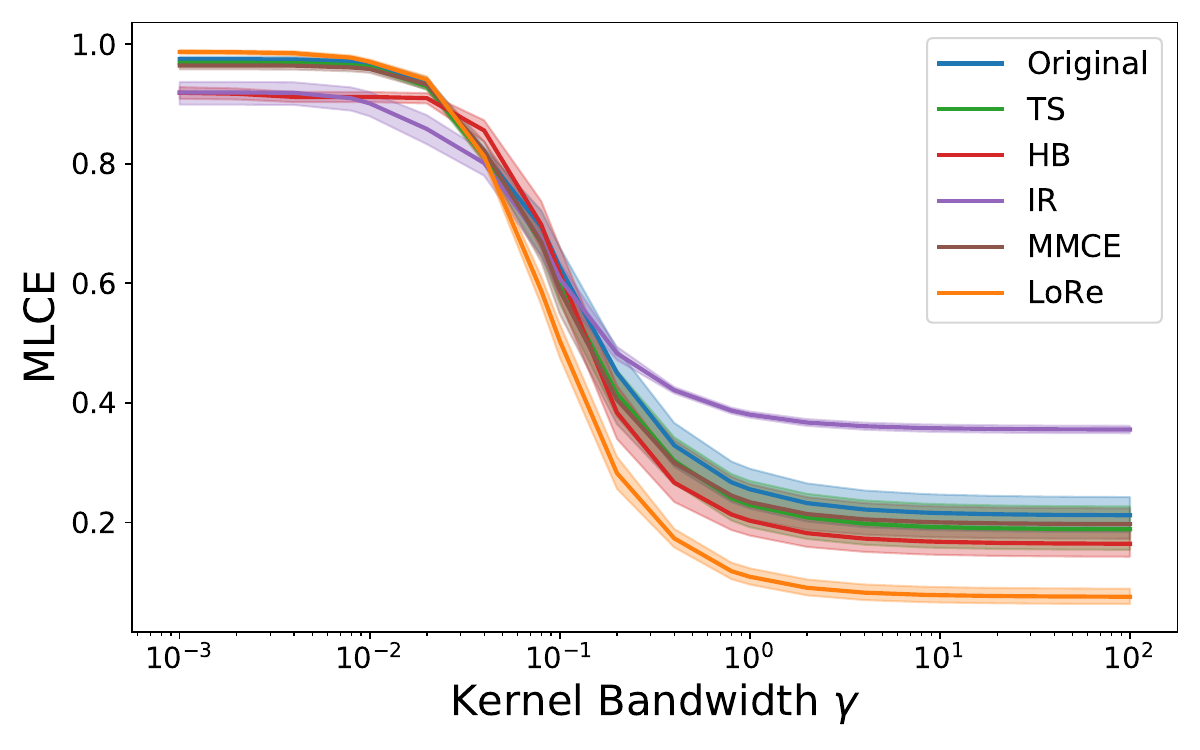}
    \hfill
    \includegraphics[width=0.48\textwidth]{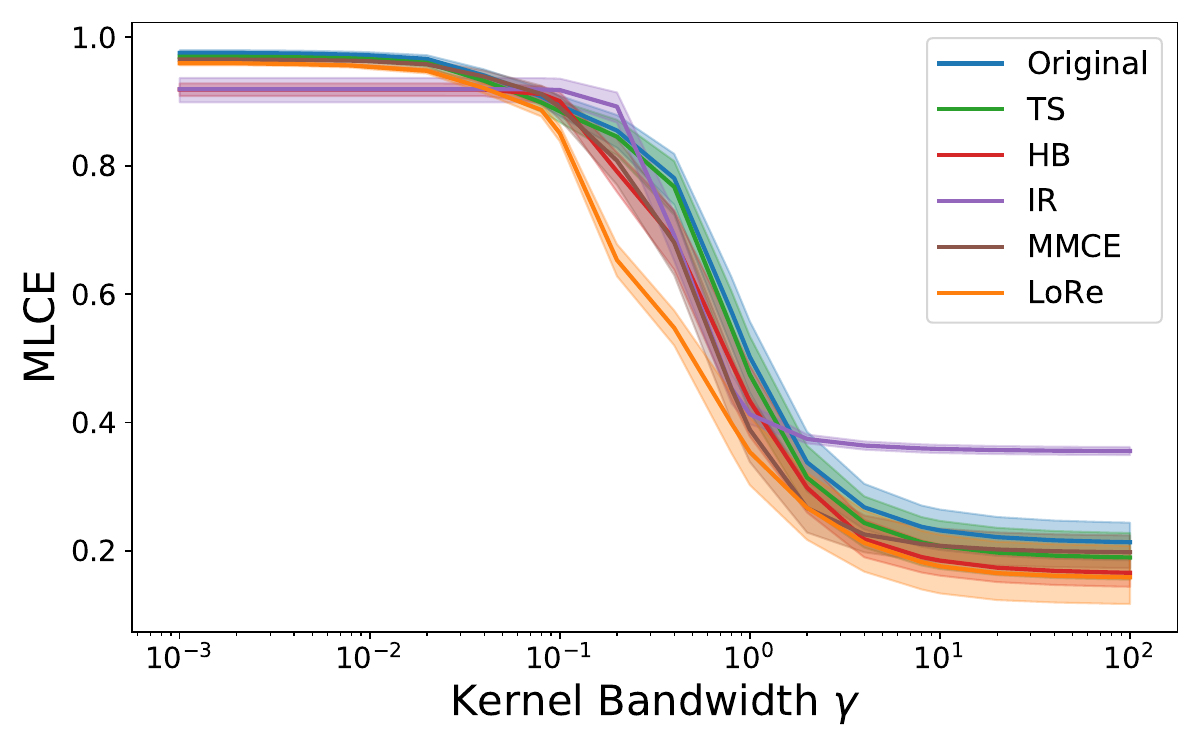}
    \captionof{figure}{MLCE vs.\ kernel bandwidth for all methods on task 4 of Section \ref{sec:fairness}, predicting criminal recidivism. \method{} achieves the best (or competitive) MLCE for most $\gamma$. Left: 2D t-SNE features. Right: 20D PCA features.}
    \label{fig:setting4_mlce}
\end{figure}

\begin{figure}[H]
    \begin{minipage}{0.48\textwidth}
        \centering
        \includegraphics[width=\linewidth,trim={0cm 0cm 0cm 0cm}]{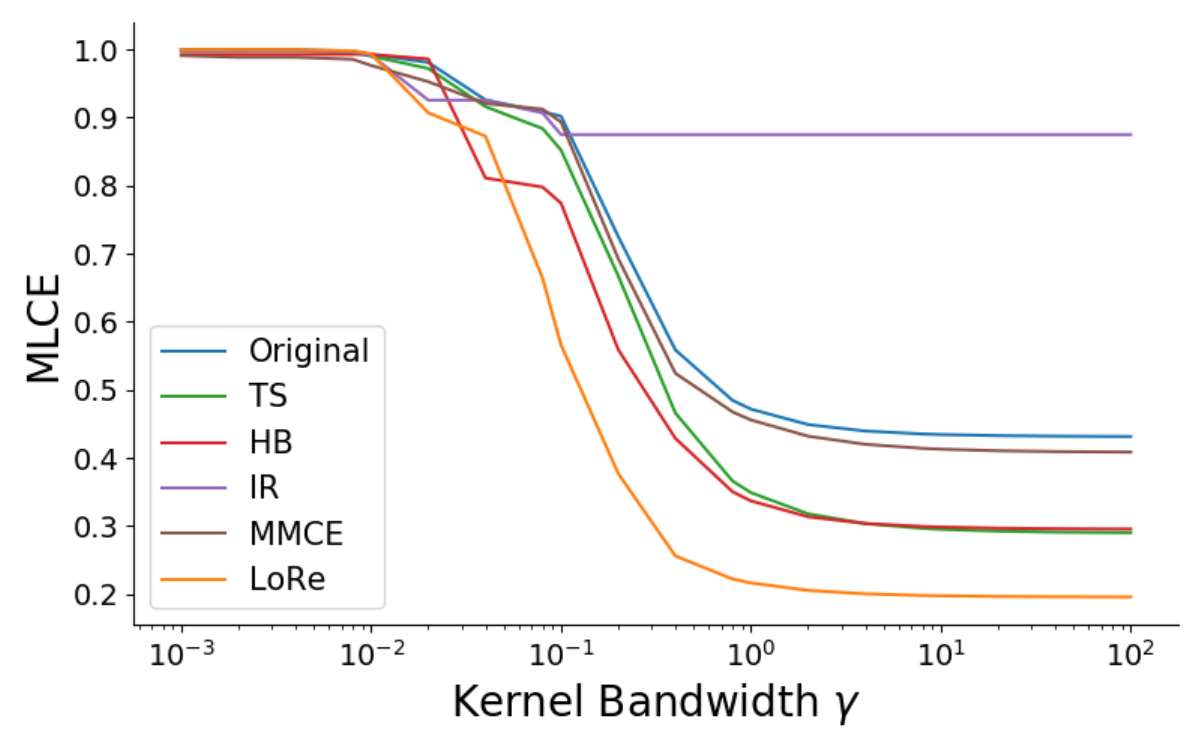}
        \caption{MLCE vs.\ kernel bandwidth $\gamma$ for all recalibration methods for CIFAR-100 (3D t-SNE features). \method{} achieves lower MLCE for most $\gamma$.}
        \label{fig:cifar100}
    \end{minipage}
    \hfill
    \begin{minipage}{0.48\textwidth}
        \centering
        \includegraphics[width=\linewidth, trim={0cm 0cm 0cm 0cm},clip]{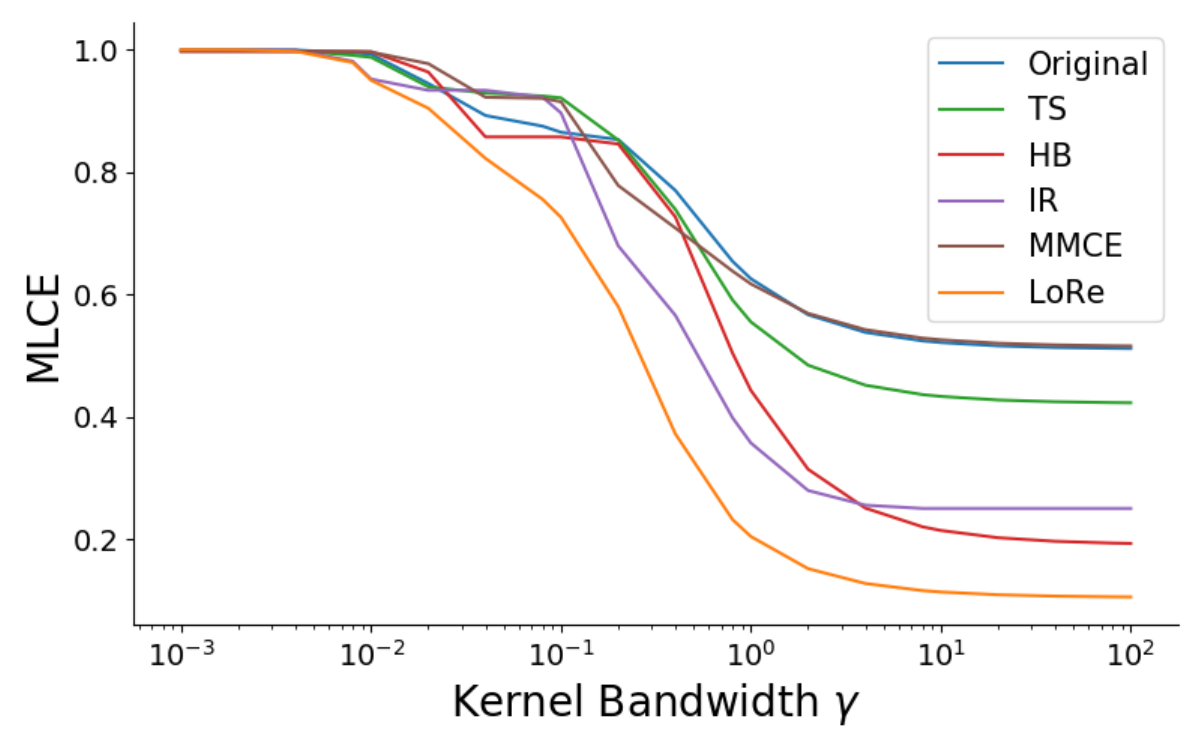}
        \caption{MLCE vs.\ kernel bandwidth $\gamma$ for all recalibration methods for CIFAR-10 (3D t-SNE features). \method{} achieves lower MLCE for most $\gamma$.}
        \label{fig:cifar10}
    \end{minipage}
\end{figure}

\begin{figure}[H]
    \begin{minipage}{0.48\textwidth}
        \centering
        \includegraphics[width=\linewidth,trim={0cm 0cm 0cm 0cm}]{figures/mlce_recalibration.pdf}
        \caption{MLCE vs.\ kernel bandwidth $\gamma$ for all recalibration methods on ImageNet using Inception-v3 features. \method{} achieves the best MLCE for most $\gamma$.}
        \label{fig:mlce_inceptionv3}
    \end{minipage}
    \hfill
    \begin{minipage}{0.48\textwidth}
        \centering
        \includegraphics[width=\linewidth, trim={0cm 0cm 0cm 0cm},clip]{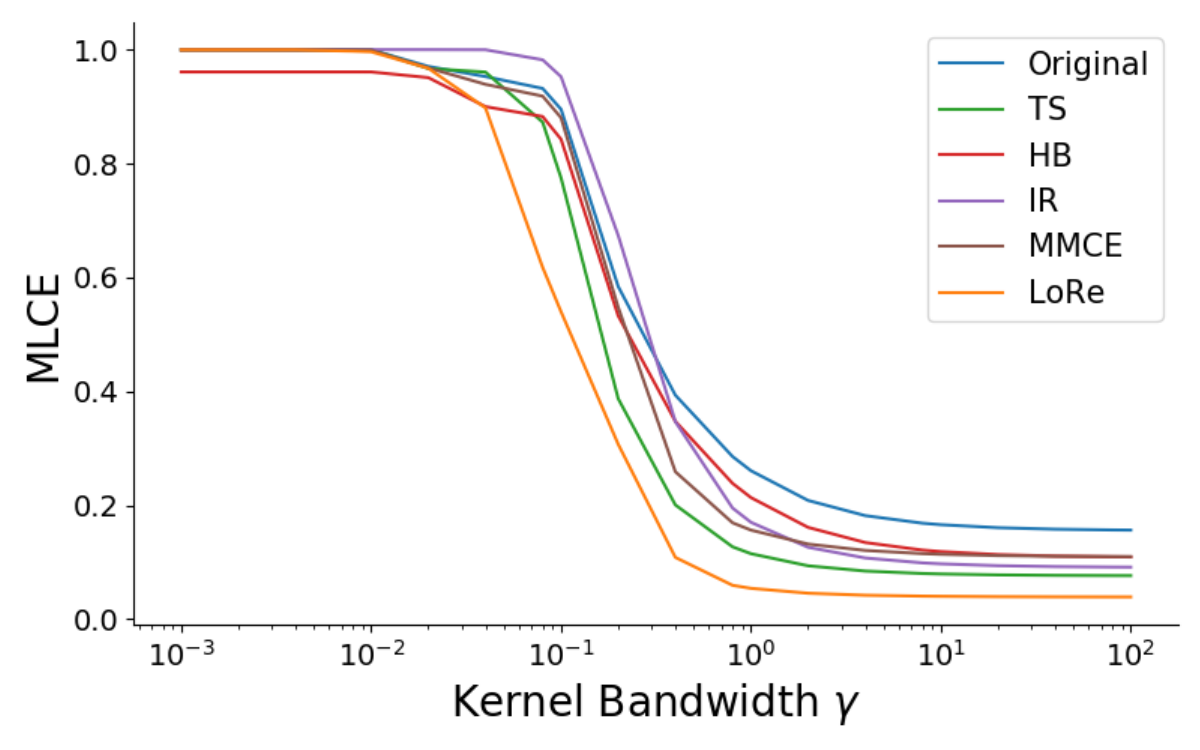}
        \caption{MLCE vs.\ kernel bandwidth $\gamma$ for all recalibration methods on ImageNet using AlexNet features. \method{} achieves the best MLCE for most $\gamma$.}
        \label{fig:mlce_alexnet}
    \end{minipage}
\end{figure}

\begin{figure}[H]
    \begin{minipage}{0.48\textwidth}
        \centering
        \includegraphics[width=\linewidth,trim={0cm 0cm 0cm 0cm}]{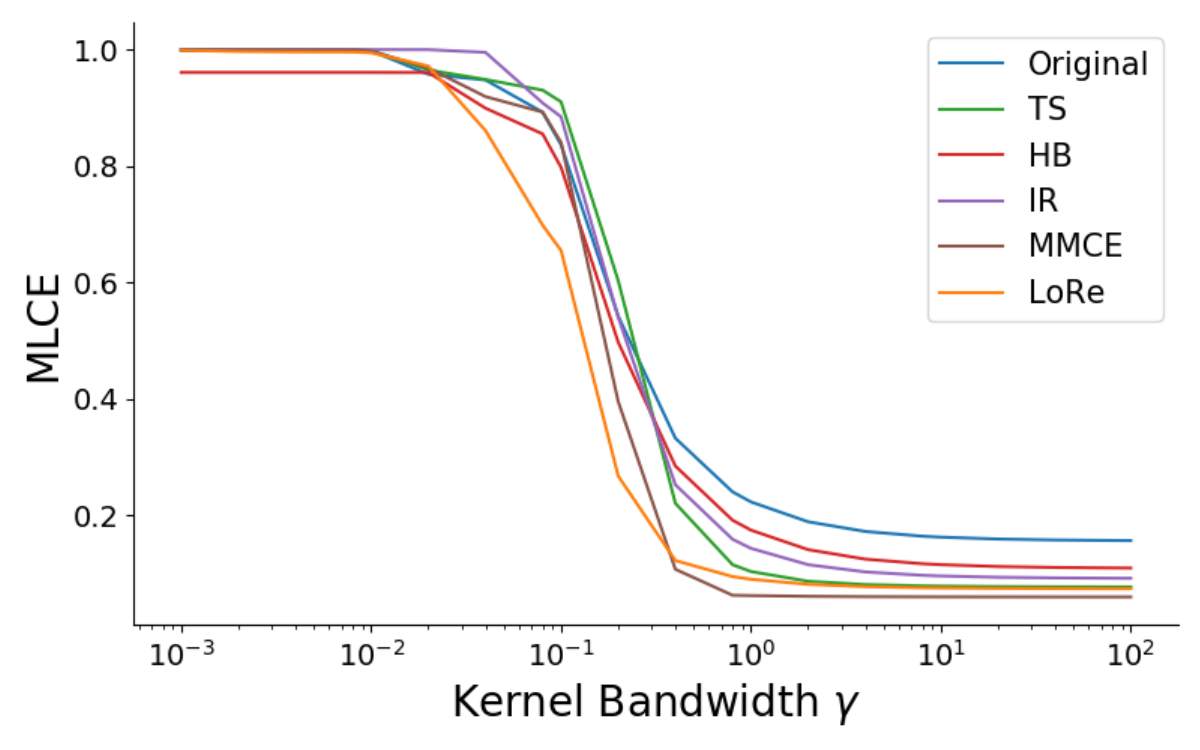}
        \caption{MLCE vs.\ kernel bandwidth $\gamma$ for all recalibration methods on ImageNet using DenseNet121 features. \method{} achieves the best MLCE for most $\gamma$.}
        \label{fig:mlce_densenet121}
    \end{minipage}
    \hfill
    \begin{minipage}{0.48\textwidth}
        \centering
        \includegraphics[width=\linewidth, trim={0cm 0cm 0cm 0cm},clip]{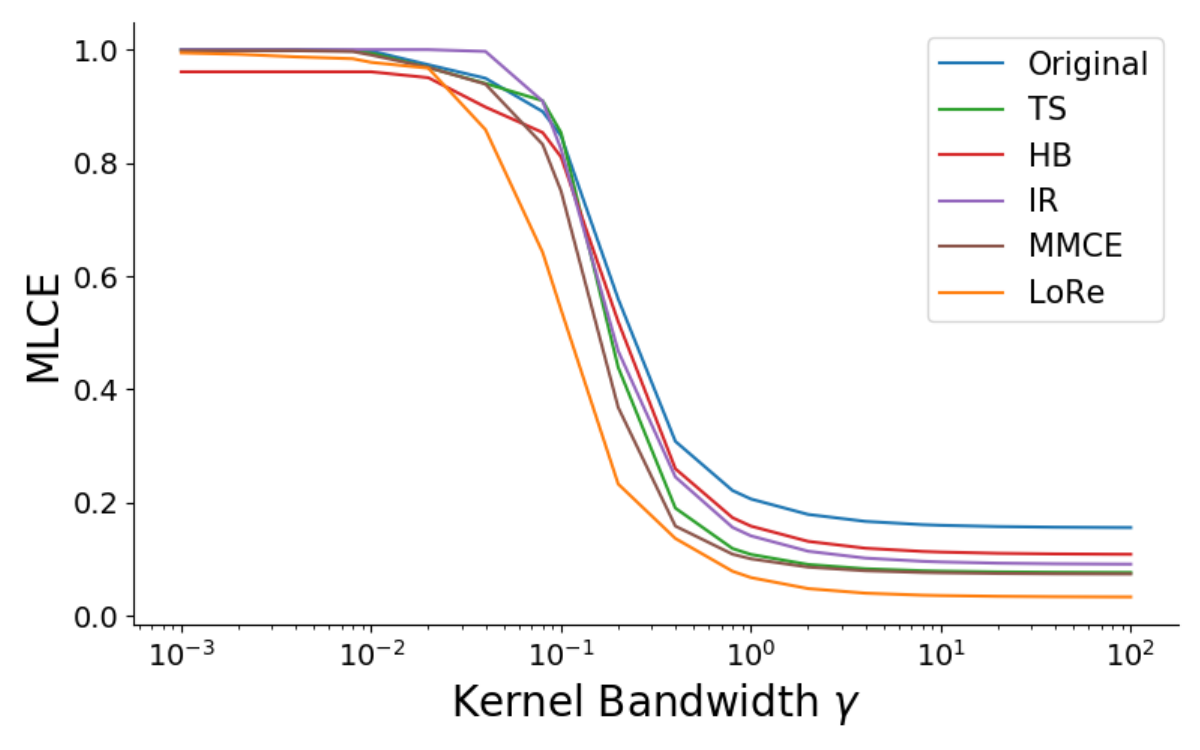}
        \caption{MLCE vs.\ kernel bandwidth $\gamma$ for all recalibration methods on ImageNet using ResNet101 features. \method{} achieves the best MLCE for most $\gamma$.}
        \label{fig:mlce_resnet101}
    \end{minipage}
\end{figure}

\begin{figure}[H]
    \begin{minipage}{0.48\textwidth}
        \centering
        \includegraphics[width=\linewidth,trim={0cm 0cm 0cm 0cm}]{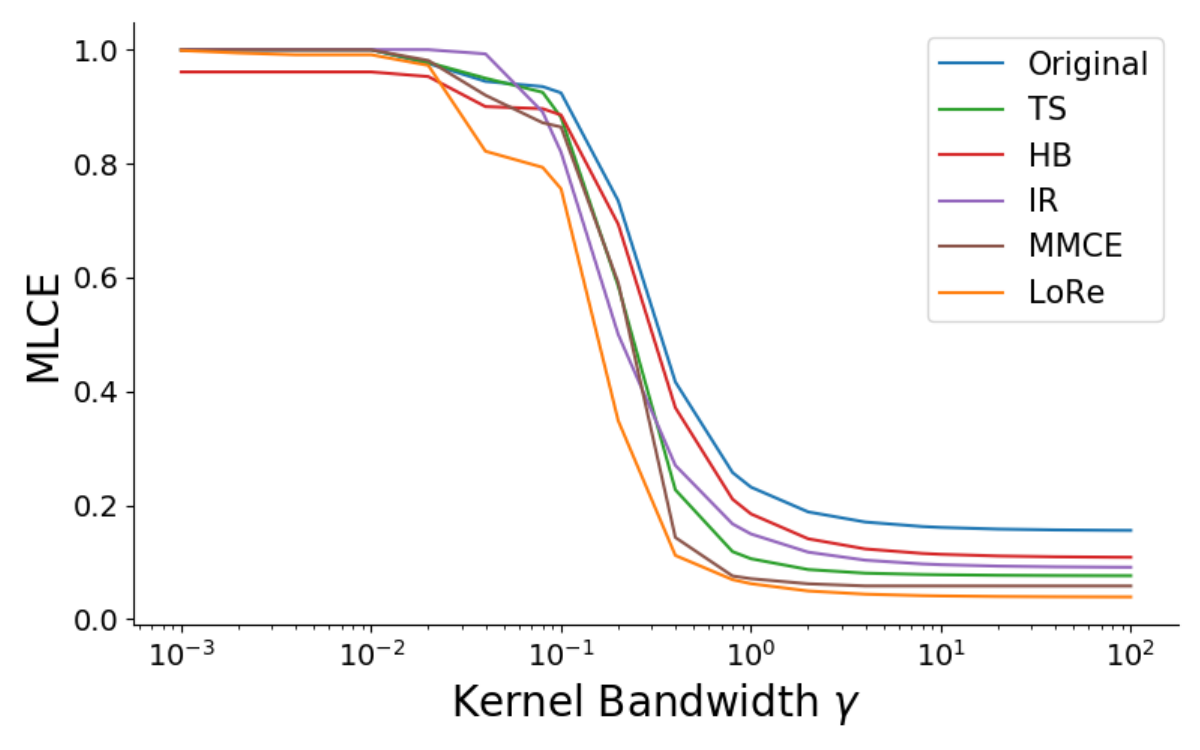}
        \caption{MLCE vs.\ kernel bandwidth $\gamma$ for all recalibration methods on ImageNet using Inception-v3 features to calculate the MLCE and AlexNet features for applying \method{}.}
        \label{fig:mlce_inceptionv3_lore_alexnet}
    \end{minipage}
    \hfill
    \begin{minipage}{0.48\textwidth}
        \centering
        \includegraphics[width=\linewidth, trim={0cm 0cm 0cm 0cm},clip]{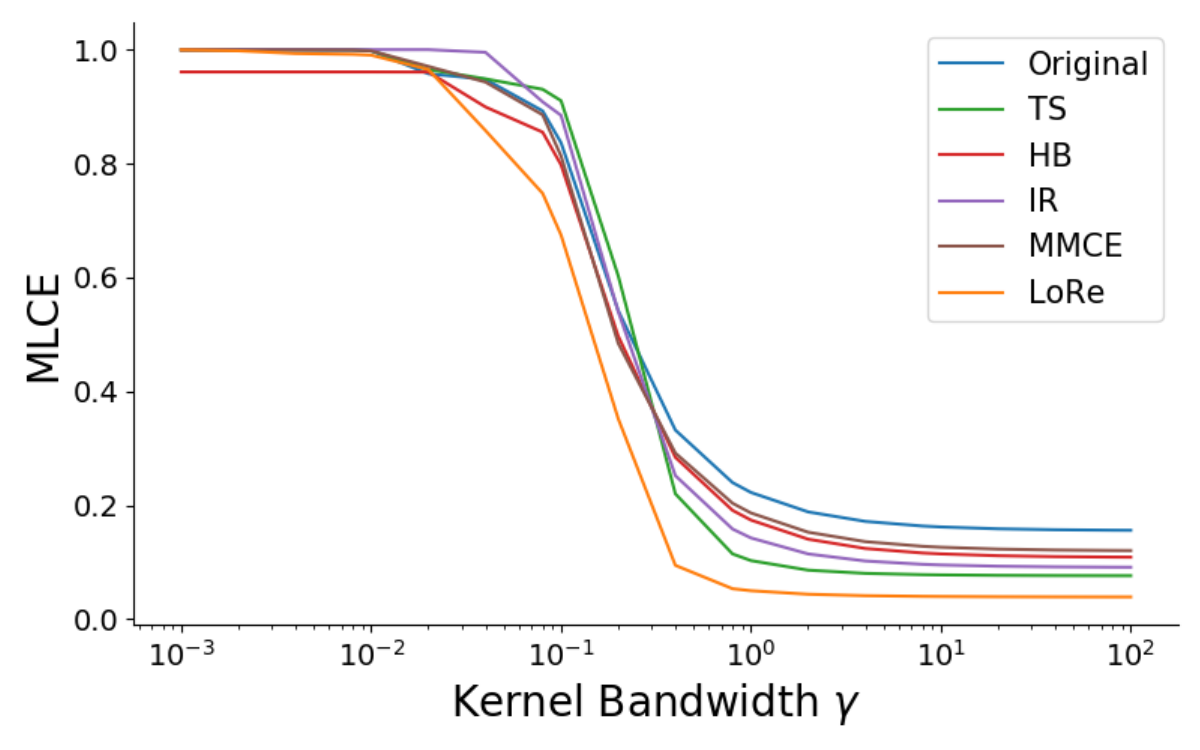}
        \caption{MLCE vs.\ kernel bandwidth $\gamma$ for all recalibration methods on ImageNet using DenseNet121 features to calculate the MLCE and AlexNet features for applying \method{}.}
        \label{fig:mlce_densenet121_lore_alexnet}
    \end{minipage}
\end{figure}

\begin{figure}[H]
    \begin{minipage}{0.48\textwidth}
        \centering
        \includegraphics[width=\linewidth,trim={0cm 0cm 0cm 0cm}]{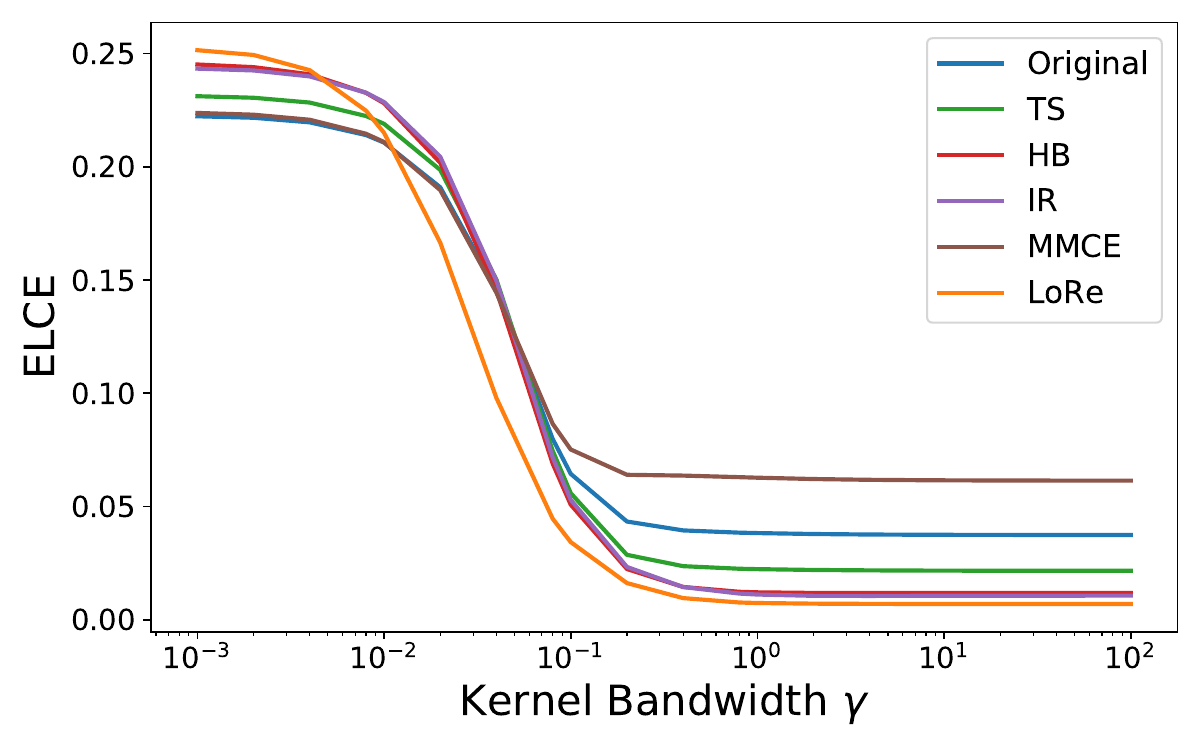}
        \caption{Average LCE vs.\ kernel bandwidth $\gamma$ for all recalibration methods on ImageNet (3D t-SNE features). \method{} gets lower average LCE for most $\gamma$.}
        \label{fig:elce_imagenet}
    \end{minipage}
    \hfill
    \begin{minipage}{0.48\textwidth}
        \centering
        \includegraphics[width=\linewidth, trim={0cm 0cm 0cm 0cm},clip]{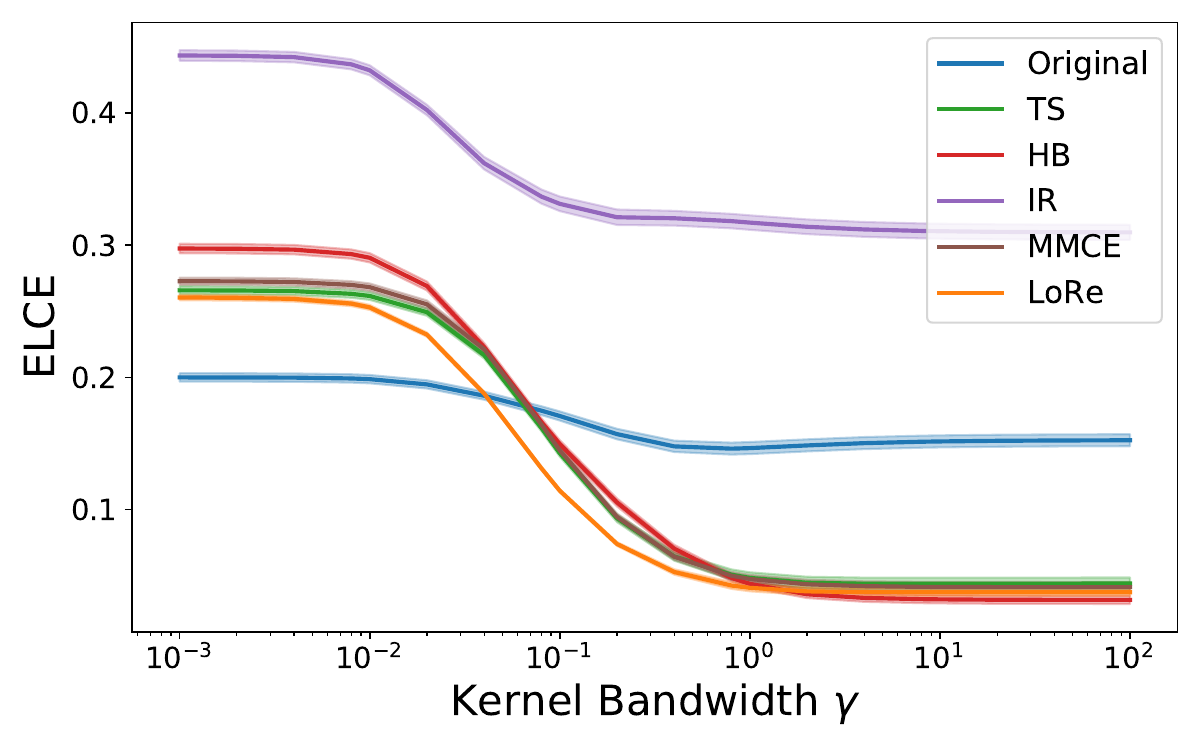}
        \caption{Average LCE vs.\ kernel bandwidth $\gamma$ for all recalibration methods in task 1 (crime data, 2D t-SNE features). \method{} gets lower average LCE for most $\gamma$.}
        \label{fig:elce_setting1}
    \end{minipage}
\end{figure}

\begin{figure}[H]
    \begin{minipage}{0.48\textwidth}
        \centering
        \includegraphics[width=\linewidth,trim={0cm 0cm 0cm 0cm}]{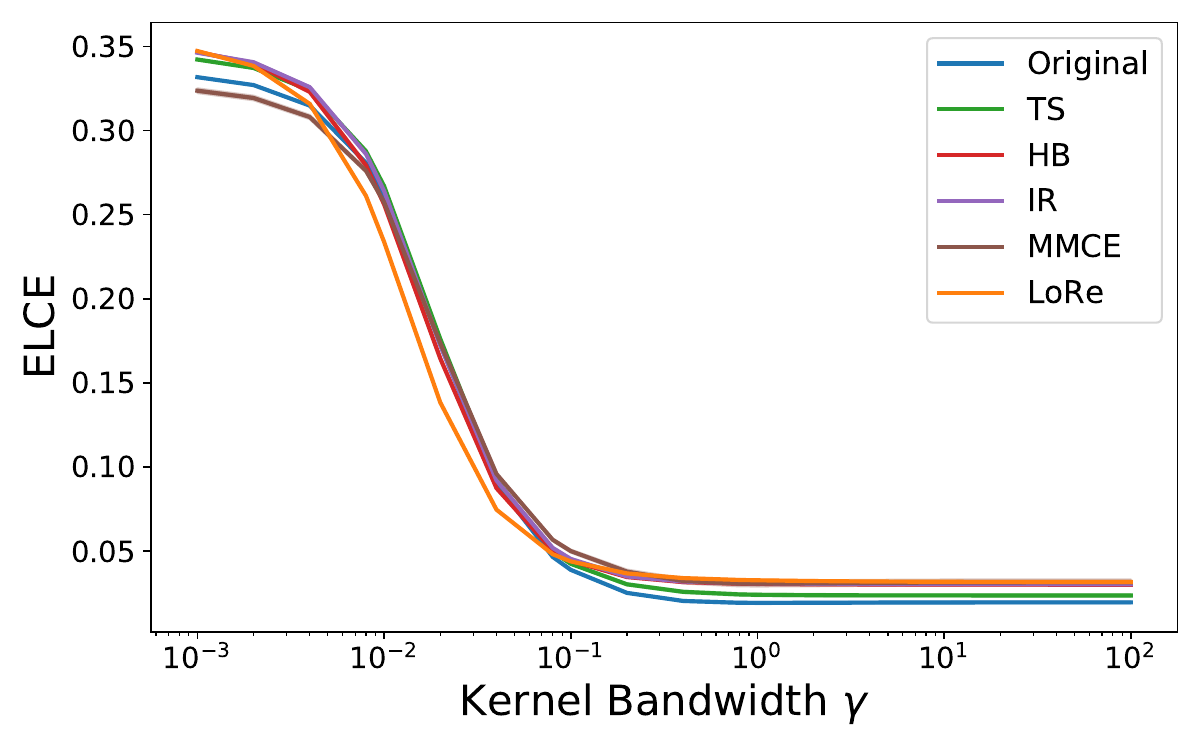}
        \caption{Average LCE vs.\ kernel bandwidth $\gamma$ for all recalibration methods in task 2 (CelebA, 2D t-SNE features).  \method{} gets lower average LCE for most $\gamma$.}
        \label{fig:elce_setting2}
    \end{minipage}
    \hfill
    \begin{minipage}{0.48\textwidth}
        \centering
        \includegraphics[width=\linewidth, trim={0cm 0cm 0cm 0cm},clip]{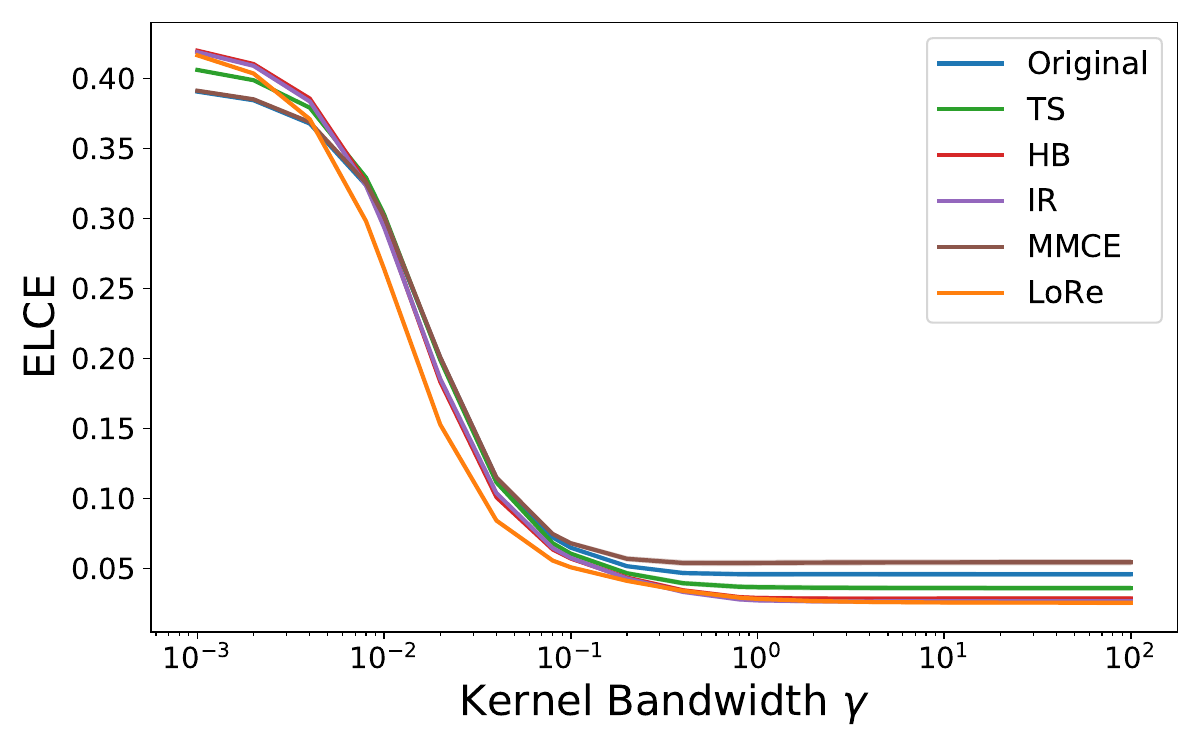}
        \caption{Average LCE vs.\ kernel bandwidth $\gamma$ for all recalibration methods in task 3 (CelebA, 2D t-SNE features). \method{} gets lower average LCE for most $\gamma$.}
        \label{fig:elce_setting3}
    \end{minipage}
\end{figure}

\section{Proof of Lemma \ref{lemma:global}}
\label{appendix:lemma}
We restate Lemma \ref{lemma:global} below, and provide the proof:
\begin{lemma}
Assume that $\lim_{\gamma \to \infty} k_{\gamma}(x, x') = 1$ for all $x, x' \in \mc{X}$. Then, as $\gamma \to \infty$, the MLCE converges to the MCE.
\end{lemma}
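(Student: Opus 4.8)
The plan is to reduce the $\MLCE$ to a maximum over confidence bins and then pass the limit $\gamma\to\infty$ through the kernel weights. First I would observe that $\beta(x)$ depends on $x$ only through the confidence bin $B(\hp(x))$ containing it: if $x$ and $x'$ lie in the same bin then $\beta(x)=\beta(x')$. Hence, fixing a (nonempty) bin $B_j$ and writing $I_j=\{i:\hp(x_i)\in B_j\}$ and $e_i=\hp(x_i)-\mathds{1}[f(x_i)=y_i]$, every $x$ with $\hp(x)\in B_j$ satisfies
\begin{equation*}
\LCE_\gamma(x;f,\hp)=\abs{\frac{\sum_{i\in I_j} e_i\, k_\gamma(x,x_i)}{\sum_{i\in I_j} k_\gamma(x,x_i)}}.
\end{equation*}

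Next I would take the pointwise limit. By hypothesis $k_\gamma(x,x_i)\to 1$ as $\gamma\to\infty$ for each fixed pair, so for each fixed $x$ with $\hp(x)\in B_j$ the ratio above converges to $\abs{\tfrac{1}{\abs{I_j}}\sum_{i\in I_j} e_i}=\abs{\mathrm{conf}(B_j)-\mathrm{acc}(B_j)}$, which is exactly the per-bin term appearing in the $\mathrm{MCE}$ (Eq.~\ref{eq:mce}). Thus the limiting value of $\LCE_\gamma(x)$ is constant over each bin and equals that bin's calibration error.

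The crux is to exchange the maximum over $x$ with the limit in $\gamma$, i.e.\ to show $\lim_\gamma \max_x \LCE_\gamma(x)=\max_x \lim_\gamma \LCE_\gamma(x)$. For this I would establish uniform convergence within each bin. Writing $k_\gamma(x,x_i)=1-\delta_i(x)$ and setting $\eps_\gamma:=\sup_x\max_i\abs{1-k_\gamma(x,x_i)}$, and using $\abs{e_i}\le 1$ together with $\abs{I_j}\ge 1$, a direct estimate of this perturbed ratio (its deviation has numerator $O(\abs{I_j}^2\eps_\gamma)$ and denominator bounded below by $\abs{I_j}(1-\eps_\gamma)$ once $\eps_\gamma<\tfrac12$) gives $\abs{\LCE_\gamma(x)-\abs{\mathrm{conf}(B_j)-\mathrm{acc}(B_j)}}\le C_j\,\eps_\gamma$ uniformly over $x$ with $\hp(x)\in B_j$, for a constant $C_j$ depending only on $\abs{I_j}$. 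Since there are finitely many bins, taking the maximum over bins then yields $\MLCE_\gamma(f,\hp)\to\max_j\abs{\mathrm{conf}(B_j)-\mathrm{acc}(B_j)}=\mathrm{MCE}(f,\hp)$.

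The main obstacle is justifying $\eps_\gamma\to 0$, since the hypothesis only supplies \emph{pointwise} convergence $k_\gamma(x,x_i)\to 1$. This is immediate when the maximum defining the $\MLCE$ ranges over the finite dataset (a finite maximum of convergent sequences), and more generally it holds whenever the feature map has bounded range on the relevant domain (e.g.\ on $\mathrm{supp}(\Pr)$): for the Laplacian kernel $\abs{1-k_\gamma(x,x_i)}\le 1-\exp(-M/(d\gamma))\to 0$ uniformly once $\norm{\phi(x)-\phi(x_i)}\le M$. Without such boundedness the statement can fail, because for any fixed $\gamma$ a point $x$ whose feature vector is arbitrarily far from all $x_i$ drives the ratio toward a nearest-neighbor value that can be as large as $1$; I would therefore state the finite-dataset (or bounded-feature) assumption explicitly as the regime in which the exchange of limit and maximum is valid.
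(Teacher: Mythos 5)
Your proposal is correct, and its skeleton is the same as the paper's: fix a confidence bin, note that the index set $\beta(x)$ is identical for all $x$ in that bin, send the kernel weights to $1$ so that the weighted ratio becomes the bin's confidence--accuracy gap, and finish by maximizing over the finitely many bins. The difference lies in the step you call the crux. The paper's proof is a four-line computation that silently moves $\lim_{\gamma\to\infty}$ inside $\max_x$; this is harmless in the regime the paper implicitly works in, where the maximum ranges over the finitely many data points (a finite maximum of pointwise-convergent quantities converges to the maximum of the limits), but the exchange is never justified. You instead establish uniform convergence within each bin via $\epsilon_\gamma=\sup_x\max_i |1-k_\gamma(x,x_i)|$ and an explicit perturbation bound on the ratio, and you observe that $\epsilon_\gamma\to 0$ does \emph{not} follow from the pointwise hypothesis alone when the maximum in Eq.~\ref{eq:mlce} is taken over all of $\mathcal{X}$: one needs the query points' features to stay within bounded distance of the data, or a restriction of the maximum to the dataset itself. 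That caveat is substantive --- for instance, with a Gaussian kernel on an unbounded feature space, a query arbitrarily far from the data concentrates the weights on its nearest neighbors for every fixed $\gamma$, so the supremum over all of $\mathcal{X}$ need not converge to the MCE --- and it is entirely absent from the paper. In short, your route buys a rigorous limit--maximum exchange and an explicit statement of the hypotheses under which the lemma is true; the paper's buys brevity by leaving both points implicit.
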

\begin{proof}
Since $\lim_{\gamma \to \infty} k_{\gamma}(x, x') = 1$ identically,
\begin{align*}
    \lim_{\gamma \to \infty} \max_x \widehat{\mathrm{LCE}}_{\gamma}(x; f, \hp)
    &= \max_x \frac{1}{\abs{\beta(x)}} \abs{\sum_{i \in \beta(x)} \hp(x_i) - \indic{f(x_i) = y_i}} \\
    &= \max_k \frac{1}{\abs{B_k}} \abs{\sum_{i \in B_k} \hp(x_i) - \indic{f(x_i) = y_i}} \\
    &= \max_k \abs{\mathrm{conf}(B_k) - \mathrm{acc}(B_k)} \\
    &= \mathrm{MCE}(x; f, \hp)
\end{align*}
\end{proof}

\section{Formal Statement and Proof of Theorem \ref{theorem:slce_informal}}
\label{appendix:lce}

Let $B_1,\dots,B_N$ denote a set of bins that partition $[0,1]$, and $B(p)$ denote the bin that a particular $p\in[0,1]$ belongs to. Let $a_f(x, y) = \indic{f(x) = y}$ indicate the accuracy of a the classifier $(f, \hp)$ on an input $x$. We consider the signed local calibration error (SLCE):
\begin{align*}
    \quad \slce_{\gamma}(x; f, \hp) &:=
    \frac{\Eb[(\hp(X) - a_f(X, Y)) k_{\gamma}(X, x) \mid \hp(X) \in B(\hp(x))]}{\Eb[k_{\gamma}(X, x) \mid \hp(X) \in B(\hp(x))]} \\
    & \; = \frac{\Eb[(\hp(X) - a_f(X, Y)) k_{\gamma}(X, x) \indic{\hp(X) \in B(\hp(x))}]}{\Eb[k_{\gamma}(X, x) \indic{\hp(X) \in B(\hp(x))}]}.
\end{align*}

\subsection{Assumptions and Formal Statement of Theorem}
We make the following assumptions:

\begin{assumption}[Lipschitz kernel]
\label{assumption:kernel}
The kernel $k_\gamma$ takes the form
\begin{align*}
    k_\gamma(x, x') = g\paren{ \frac{\phi(x) - \phi(x')}{\gamma}},
\end{align*}
where $\phi:\mc{X}\to \R^d$ is a representation function, and $g:\R^d\to [0,1]$ is $L$-Lipschitz with respect to some norm $\norm{\cdot}$.
\end{assumption}
Note this definition may require an implicit rescaling (for example, we can take $\phi(x)\leftarrow \phi^{\rm feature}(x)/d$ for a $d$-dimensional feature map $\phi^{\rm feature}$ and take $g(z)=\exp(-\|z\|_1)$, which corresponds to the Laplacian kernel we used in Section~\ref{section:choice-kernel}).

\begin{assumption}[Binning-aware covering number]
\label{assumption:cover}
For any $\epsilon>0$, the range of the representation function $\phi(\mc{X}):= \set{\phi(x): x \in \mc{X}}$ has an $\epsilon$-cover in the $\norm{\cdot}$-norm of size $(C/\epsilon)^d$ for some absolute constant $C>0$: There exists a set $\mc{N}_\epsilon\in\mc{X}$ with $|\mc{N}_\epsilon|\le (C/\epsilon)^d$ such that for any $x\in\mc{X}$, there exists some $x'\in \mc{N}_\epsilon$ such that $\norm{\phi(x)-\phi(x')}\le \epsilon$ and $B(\hp(x))=B(\hp(x'))$.
\end{assumption}

\begin{assumption}[Lower bound on expectation of kernel within bin]
\label{assumption:lower-bound}
We have
\begin{align*}
    \inf_{x\in\mc{X}} \E\brac{k_\gamma(X, x) \indic{\hp(X)\in B(\hp(x))}} \ge \alpha
\end{align*}
for some constant $\alpha\in(0,1)$. 
\end{assumption}
The constant $\alpha$ characterizes the hardness of estimating the SLCE from samples. Intuitively, with a smaller $\alpha$, the denominator in SLCE gets smaller and we desire a higher accuracy in estimating both the numerator and the denominator. Also note that in practice the value of $\alpha$ typically depends on $\gamma$.

We analyze the following estimator of the SLCE using $n$ samples:
\begin{align}
    \slcehat_\gamma(x; f, \hp) = \frac{ \frac{1}{n}\sum_{i=1}^n (\hp(x_i) - a_f(x_i, y_i))k_\gamma(x_i, x) \indic{\hp(x_i) \in B(\hp(x))} }{ \frac{1}{n}\sum_{i=1}^n k_\gamma(x_i, x) \indic{\hp(x_i) \in B(\hp(x))}  }.
\end{align}

\begin{theorem}
\label{theorem:slce}
Under Assumptions~\ref{assumption:kernel},~\ref{assumption:cover}, and~\ref{assumption:lower-bound}, Suppose the sample size $n\ge \widetilde{O}(d/\alpha^4\epsilon^2)$ where $\epsilon>0$ is a target accuracy level, then with probability at least $1-\delta$ we have
\begin{align*}
    \sup_{x\in\mc{X}} \abs{\slcehat_\gamma(x; f, \hp) - \slce_\gamma(x; f, \hp) } \le \epsilon,
\end{align*}
where $\widetilde{O}$ hides log factors of the form $\log(L/\gamma\epsilon\delta \alpha)$.
\end{theorem}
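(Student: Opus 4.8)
The plan is to view the (signed) SLCE as a ratio and control numerator and denominator separately. Write $\slce_\gamma(x;f,\hp)=u(x)/v(x)$ with numerator $u(x)=\Eb[(\hp(X)-a_f(X,Y))\,k_\gamma(X,x)\,\indic{\hp(X)\in B(\hp(x))}]$ and denominator $v(x)=\Eb[k_\gamma(X,x)\,\indic{\hp(X)\in B(\hp(x))}]$, with empirical counterparts $\hat u(x),\hat v(x)$ so that $\slcehat_\gamma(x;f,\hp)=\hat u(x)/\hat v(x)$. Note each summand of $\hat v$ lies in $[0,1]$ (since $g\in[0,1]$ and the indicator is in $\set{0,1}$) and each summand of $\hat u$ lies in $[-1,1]$ (since $\abs{\hp-a_f}\le 1$). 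The proof reduces to (i) a uniform-in-$x$ concentration $\sup_x\abs{\hat u(x)-u(x)}\le\eta$ and $\sup_x\abs{\hat v(x)-v(x)}\le\eta$, and (ii) converting this into control of the ratio using the lower bound $v(x)\ge\alpha$ from Assumption~\ref{assumption:lower-bound}.

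For the uniform concentration I would run a covering argument at resolution $\epsilon_0$, using the binning-aware net $\mc{N}_{\epsilon_0}$ of Assumption~\ref{assumption:cover}, of cardinality at most $(C/\epsilon_0)^d$. For each fixed net point $x'$, Hoeffding's inequality controls $\abs{\hat v(x')-v(x')}$ and $\abs{\hat u(x')-u(x')}$ by some $t'$ except with probability $2\exp(-\Omega(n t'^2))$; a union bound over the net and over both $u$ and $v$ costs a factor $4(C/\epsilon_0)^d$, so $t'\asymp\sqrt{(d\log(C/\epsilon_0)+\log(1/\delta))/n}$ suffices with probability $1-\delta$. To pass from the net to an arbitrary $x$, I use that the net is binning-aware: its representative $x'$ satisfies $B(\hp(x))=B(\hp(x'))$, so the indicator $\indic{\hp(X)\in B(\hp(x))}$ is \emph{literally unchanged} and only the kernel term differs. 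By Assumption~\ref{assumption:kernel} and $L$-Lipschitzness of $g$, $\abs{k_\gamma(X,x)-k_\gamma(X,x')}\le (L/\gamma)\norm{\phi(x)-\phi(x')}\le L\epsilon_0/\gamma$, which transfers termwise to $\abs{\hat v(x)-\hat v(x')}\le L\epsilon_0/\gamma$ and identically to $u$, $v$, $\hat v$. Taking $\epsilon_0\asymp\gamma\alpha^2\epsilon/L$ makes this transfer error lower order, yielding $\sup_x\abs{\hat v(x)-v(x)}\le\eta$ (and the same for $u$) with $\eta\asymp\sqrt{(d\log(L/\gamma\alpha\epsilon)+\log(1/\delta))/n}$; this is where the advertised log factor of the form $\log(L/\gamma\epsilon\delta\alpha)$ enters.

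For the ratio step, $v(x)\ge\alpha$ by Assumption~\ref{assumption:lower-bound}, so once $\eta\le\alpha/2$ we also have $\hat v(x)\ge\alpha/2$ uniformly. Using the identity $\hat u/\hat v-u/v=(\hat u-u)/\hat v-u(\hat v-v)/(\hat v v)$ together with $\abs{u}\le 1$, $v\ge\alpha$, and $\hat v\ge\alpha/2$ gives $\sup_x\abs{\slcehat_\gamma(x;f,\hp)-\slce_\gamma(x;f,\hp)}\le 2\eta/\alpha+2\eta/\alpha^2\le 4\eta/\alpha^2$. Requiring this to be at most $\epsilon$ forces $\eta\lesssim\alpha^2\epsilon$, hence $n\gtrsim(d\log(\cdots)+\log(1/\delta))/(\alpha^2\epsilon)^2=\wt O(d/\alpha^4\epsilon^2)$, which matches the claim. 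The factor $\alpha^4$ is exactly $(\alpha^2)^2$: one $\alpha^2$ from the $1/\alpha^2$ amplification in the second ratio term, then squared when inverting the Hoeffding rate. (Exploiting the sharper $\abs{u}\le v$, equivalently $\abs{\slce_\gamma}\le 1$, would improve the amplification to $1/\alpha$ and the rate to $\wt O(d/\alpha^2\epsilon^2)$, but the cruder bound already proves the stated theorem.)

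The main obstacle is the net-to-$x$ transfer in the concentration step: the indicator $\indic{\hp(X)\in B(\hp(x))}$ is discontinuous in $x$ at bin boundaries, so a naive Lipschitz argument fails. The role of the binning-aware covering (Assumption~\ref{assumption:cover}) is precisely to guarantee that every $x$ has a representative in the \emph{same} confidence bin, so the indicators coincide exactly and only the smooth kernel term must be controlled. Getting the $\alpha$-dependence right in the ratio step — confirming it yields $\alpha^4$ and that $\eta\le\alpha/2$ is implied by the sample-size requirement — is the other place demanding care.
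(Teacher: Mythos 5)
Your proposal is correct and follows essentially the same route as the paper's proof: a numerator/denominator decomposition, Hoeffding plus a union bound over the binning-aware net of Assumption~\ref{assumption:cover} at resolution $\asymp \gamma\alpha^2\epsilon/L$, Lipschitz transfer of the kernel term (with the indicator unchanged because the net is binning-aware), and a ratio-perturbation bound using $v(x)\ge\alpha$, yielding the same $\wt{O}(d/\alpha^4\epsilon^2)$ rate. The only differences are bookkeeping: the paper concentrates the numerator to accuracy $\alpha\epsilon/10$ and the denominator to $\alpha^2\epsilon/10$ (rather than your single symmetric $\eta\lesssim\alpha^2\epsilon$) and applies the triangle inequality at the level of the ratio after the net step, but the bottleneck term and final sample complexity are identical.
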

Theorem~\ref{theorem:slce} shows that $\wt{O}(d/\epsilon^2\alpha^4)$ samples is sufficient to estimate the SLCE simultaneously for all $x\in\mc{X}$. When $\alpha=\Omega(1)$, this sample complexity only depends polynomially in terms of the representation dimension $d$ and logarithmically in other constants (such as $L,\gamma$, and the failure probability $\delta$).

\subsection{Proof of Theorem~\ref{theorem:slce}}

{\bf Step 1.} We first study the estimation at finitely many $x$'s. Let $\mc{N}\subseteq\mc{X}$ be a finite set of $x$'s with $|\mc{N}|=N$. Since $k_\gamma\in[0,1]$ and $|\hp(x)-a_f(x, y)|\le 1$ are bounded variables, by the Hoeffding inequality and a union bound, we have
\begin{align*}
    & \quad \P\bigg( \sup_{x \in \mc{N}} \bigg| \frac{1}{n}\sum_{i=1}^n (\hp(x_i) - a_f(x_i, y_i))k_\gamma(x_i, x) \indic{\hp(x_i) \in B(\hp(x))} \\
    & \qquad \qquad - \E\brac{ (\hp(X) - a_f(X, Y))k_\gamma(X, x) \indic{\hp(X) \in B(\hp(x)) } }\bigg| > \alpha\epsilon/10\bigg) \\
    & \le \exp(-cn\alpha^2\epsilon^2 + \log N).
\end{align*}
Therefore, as long as $n\ge O(\log (N/\delta)/\epsilon^2\alpha^2)$ samples, the above probability is bounded by $\delta$. In other words, with probability at least $1-\delta$, we have simultaneously
\begin{align*}
    & \bigg| 
    \underbrace{\frac{1}{n}\sum_{i=1}^n (\hp(x_i) - a_f(x_i, y_i))k_\gamma(x_i, x) \indic{\hp(x_i) \in B(\hp(x))}}_{:=\hat{A}(x)} \\
    & \qquad - \underbrace{\E\brac{ (\hp(X) - a_f(X, Y))k_\gamma(X, x) \indic{\hp(X) \in B(\hp(x)) } }}_{:=A(x)} \bigg| \\
    & \le  \alpha\epsilon/10.
\end{align*}
for all $x\in \mc{N}$. Similarly, when $n\ge O(\log(N/\delta)/\epsilon^2\alpha^4)$, we also have (with probability at least $1-\delta$)
\begin{align*}
    \bigg| 
    \underbrace{\frac{1}{n}\sum_{i=1}^n k_\gamma(x_i, x) \indic{\hp(x_i) \in B(\hp(x))}}_{:= \hat{B}(x)} - \underbrace{\E\brac{ k_\gamma(X, x) \indic{\hp(X) \in B(\hp(x)) } }}_{:= B(x)}
    \bigg| \le  \alpha^2\epsilon/10
\end{align*}
On these concentration events, we have for any $x\in \mc{N}$ that
\begin{align*}
    \quad \abs{\slcehat_\gamma(x; f, \hp) - \slce_\gamma(x; f, \hp)} 
    & = \abs{ \frac{\hat{A}(x)}{\hat{B}(x)} - \frac{A(x)}{B(x)} } \\ 
    & \le \abs{\hat{A}(x)} \abs{\frac{1}{\hat{B}(x)} - \frac{1}{B(x)} } + \frac{1}{\abs{B(x)}} \abs{\hat{A}(x) - A(x)} \\
    & \le 1\cdot \frac{\alpha^2 \epsilon/10}{\alpha (\alpha - \alpha^2\epsilon/10)} + \frac{1}{\alpha} \cdot \alpha\epsilon/10 \\
    & \le \epsilon.
\end{align*}

{\bf Step 2.} We now extend the bound to all $x\in\mc{X}$ using the covering argument. By Assumption~\ref{assumption:cover}, we can take an $\alpha^2\epsilon\gamma/(10L)$-covering of $\phi(\mc{X})$ with cardinality $N\le (10CL/\alpha^2\epsilon\gamma)^d$. Let $\mc{N}\subset \mc{X}$ denote the covering set (in the $\mc{X}$ space). This means that for any $x\in\mc{X}$, there exists $x'\in\mc{N}$ such that $\norm{\phi(x) - \phi(x')}\le \alpha^2\epsilon\gamma/(10L)$ amd $B(\hp(x))=B(\hp(x'))$, which implies that for any $\wt{x}\in\mc{X}$ we have
\begin{align*}
    \quad \abs{k(\wt{x}, x) - k(\wt{x}, x')} 
    & = \abs{f\paren{\frac{\phi(\wt{x}) - \phi(x)}{\gamma}} - f\paren{\frac{\phi(\wt{x}) - \phi(x')}{\gamma}}} \\
    & \le \frac{L}{\gamma} \norm{\phi(x) - \phi(x')} \\
    & \le \alpha^2\epsilon/10,
\end{align*}
where we have used the Lipschitzness assumption of $g$ (Assumption~\ref{assumption:kernel}). This further implies
\begin{align*}
    \quad \abs{\hat{A}(x) - \hat{A}(x')} 
    & = \biggl| \frac{1}{n}\sum_{i=1}^n (\hp(x_i) - a_f(x_i, y_i))k_\gamma(x_i, x) \indic{\hp(x_i) \in B(\hp(x))} \\
    & \qquad \qquad - \frac{1}{n}\sum_{i=1}^n (\hp(x_i) - a_f(x_i, y_i))k_\gamma(x_i, x') \indic{\hp(x_i) \in B(\hp(x'))} \biggr| \\
    & = \biggl| \frac{1}{n}\sum_{i=1}^n (\hp(x_i) - a_f(x_i, y_i))\brac{k_\gamma(x_i, x) - k_\gamma(x_i, x')} \indic{\hp(x_i) \in B(\hp(x))} \biggr| \\
    & \le \frac{1}{n}\sum_{i=1}^n \abs{\hp(x_i) - a_f(x_i, y_i)} \cdot \abs{k_\gamma(x_i, x) - k_\gamma(x_i, x')} \cdot \indic{\hp(x_i) \in B(\hp(x))} \\
    & \le \alpha^2\epsilon/10.
\end{align*}
Similarly, we have $|A(x) - A(x')|\le \alpha^2\epsilon/10$, $|\hat{B}(x) - \hat{B}(x')|\le \alpha^2\epsilon/10$, and $|B(x) - B(x')|\le \alpha^2\epsilon/10$. This means that the estimation error at $x$ is close to that at $x'\in\mc{N}$ and consequently also bounded by $\epsilon$:
\begin{align*}
    \abs{\slcehat_\gamma(x; f, \hp) - \slce_\gamma(x; f, \hp)} &= \abs{\frac{\hat{A}(x)}{\hat{B}(x)} - \frac{A(x)}{B(x)} } \\
    & \le \abs{ \frac{\hat{A}(x)}{\hat{B}(x)} - \frac{\hat{A}(x')}{\hat{B}(x')} } + \abs{ \frac{\hat{A}(x')}{\hat{B}(x')} - \frac{A(x')}{B(x')} } + \abs{ \frac{A(x')}{B(x')} - \frac{A(x)}{B(x)} } \\
    & \le 3 \brac{ 1\cdot \frac{\alpha^2 \epsilon/10}{\alpha (\alpha - \alpha^2\epsilon/10)} + \frac{1}{\alpha} \cdot \alpha^2\epsilon/10 } \\
    & \le \epsilon.
\end{align*}
Therefore, taking this $\mc{N}$ in step 1, we know that as long as the sample size
\begin{align*}
    N \ge O\paren{ \frac{\log(|\mc{N}|/\delta)}{\epsilon^2\alpha^4} } = O\paren{ \frac{d\brac{\log(10CL/\alpha^2\epsilon\gamma) + \log(1/\delta)}}{\alpha^4\epsilon^2} } = \wt{O}\paren{d/\alpha^4\epsilon^2},
\end{align*}
we have with probability at least $1-\delta$ that
\begin{align*}
    \sup_{x\in\mc{X}} \abs{\slcehat_\gamma(x; f, \hp) - \slce_\gamma(x; f, \hp)} \le \epsilon.
\end{align*}
This is the desired result.

\qed


\end{document}